\documentclass{article}

\PassOptionsToPackage{sort,numbers}{natbib}




\usepackage[preprint]{neurips_2021}


\usepackage[utf8]{inputenc} 
\usepackage[T1]{fontenc}    
\usepackage{xcolor}\definecolor{darkblue}{rgb}{0,0,0.75}
\usepackage[colorlinks=true, citecolor=darkblue, urlcolor=darkblue,linkcolor=darkblue]{hyperref}       
\usepackage{url}            
\usepackage{booktabs}       
\usepackage{amsfonts}       
\usepackage{nicefrac}       
\usepackage{microtype}      
\usepackage{amsthm}
\usepackage{amsmath}
\usepackage{amssymb}
\usepackage{xfrac}
\usepackage{wrapfig}
\usepackage[caption=false]{subfig}
\usepackage{dsfont}

\usepackage[export]{adjustbox}
    
\usepackage{tikz}
\usepackage{verbatim}
\usetikzlibrary{decorations.pathreplacing}
\usepackage{soul} 

\newcommand{\RR}{\mathbb{R} }

\newcommand{\Exp}[2]{\mathop{{}\mathbb{E}_{#1}} \Big[ #2 \Big] }

\newcommand{\Acal}{\mathcal{A}}

\newcommand{\Dcal}{\mathcal{D}}
\newcommand{\Ecal}{\mathcal{E}}

\newcommand{\w}{\mathbf{w}}
\newcommand{\uu}{\mathbf{u}}
\newcommand{\vv}{\mathbf{v}}

\newcommand{\loss}{\ell}
\newcommand*{\vvec}[1]{#1}

\newcommand{\modelspace}{\mathcal{H}}

\newcommand{\Xcal}{\mathcal{X}}
\newcommand{\Ycal}{\mathcal{Y}}

\newcommand{\defemph}[1]{\emph{#1}}

\newcommand*\samethanks[1][\value{footnote}]{\footnotemark[#1]}

\newtheorem{theorem}{Theorem}

\newtheorem{proposition}[theorem]{Proposition}
\newtheorem{definition}[theorem]{Definition}
\newtheorem{corollary}[theorem]{Corollary}

\usepackage{xr}
\makeatletter
\newcommand*{\addFileDependency}[1]{
  \typeout{(#1)}
  \@addtofilelist{#1}
  \IfFileExists{#1}{}{\typeout{No file #1.}}
}
\makeatother


\title{Relative Flatness and Generalization
}

\author{%
  Henning Petzka\thanks{equal contribution} \\
  Lund University, Sweden\\
  \texttt{henning.petzka@math.lth.se} \\
  \And
  Michael Kamp\samethanks \\
  CISPA Helmholtz Center for Information Security,\\ Germany
  and Monash University, Australia\\
  \texttt{michael.kamp@monash.edu} \\
  \And
  Linara Adilova\\
  Ruhr University Bochum, Germany\\
  and Fraunhofer IAIS
  \And
  Cristian Sminchisescu\\
  Lund University, Sweden\\
  and Google Research, Switzerland\\
  \And
  Mario Boley\\
  Monash University, Australia\\
}

\hypersetup{final}


\begin{document}

\maketitle

\begin{abstract}
    Flatness of the loss curve is conjectured to be connected to the generalization ability of machine learning models, in particular neural networks. While it has been empirically observed that flatness measures
    consistently correlate strongly with generalization, it is still an open theoretical problem
    why and under which circumstances flatness is connected to generalization, in particular in light of reparameterizations that change certain flatness measures but leave generalization unchanged.
    We investigate the connection between flatness and generalization by relating it to the interpolation from representative data, deriving notions of representativeness, and feature robustness. The notions allow us to rigorously connect flatness and generalization and to identify conditions under which the connection holds. Moreover, they give rise to a novel, but natural relative flatness measure that correlates strongly with generalization, simplifies to ridge regression for ordinary least squares, and solves the reparameterization issue.
\end{abstract}


\section{Introduction}
Flatness of the loss curve has been identified as a potential predictor for the generalization abilities of machine learning models~\citep{flatMinima, dzigugaite, simplifyingByFlat}. In particular for neural networks, it has been repeatedly observed that generalization performance correlates with measures of flatness, i.e., measures that quantify the change in loss under perturbations of the model parameters~\citep{entropySGD,  keskarLarge, foret2021sharpnessaware, zheng2020regularizing, sun2020exploring,adversarial_weightPerturbation, fisherRao, yao2018hessian}. In fact, \citet{jiang2020fantastic} perform a large-scale empirical study and find that flatness-based measures have a higher correlation with generalization than alternatives like weight norms, margin-, and optimization-based measures.
It is an open problem why and under which circumstances this correlation holds, in particular in the light of negative results on reparametrizations of ReLU neural networks~\citep{dinhSharp}: these reparameterizations change traditional measures of flatness, yet leave the model function and its generalization unchanged, making these measures unreliable. 
We present a novel and rigorous approach to understanding the connection between flatness and generalization by relating it to the interpolation from representative samples. 
Using this theory we, for the first time, identify conditions under which flatness explains generalization. At the same time, we derive a 
measure of \textit{relative flatness} that simplifies to ridge/Tikhonov regularization for ordinary least squares~\citep{tikhonov}, and  resolves the reparametrization issue for ReLU  networks~\cite{dinhSharp} by appropriately taking the norm of parameters into account as suggested by \citet{neyshabur2015norm}.

Formally, we connect flatness of the loss surface to the \defemph{generalization gap} $\Ecal_{gen}(f,S)= \Ecal(f) - \Ecal_{emp}(f,S)$ of a \defemph{model} $f:\Xcal\rightarrow\Ycal$ from a \defemph{model class} $\modelspace$ with respect to a twice differentiable \defemph{loss function} $\loss:\Ycal\times\Ycal\rightarrow\RR_+$ and a finite sample set $S\subseteq\Xcal\times \Ycal$, where 
\[
 \Ecal(f) = \Exp{(x,y)\sim \Dcal}{\loss(f(x),y)} \ \text{ and }\  \Ecal_{emp}(f,S) = \frac{1}{|S|} \sum_{(x,y)\in S} \loss( f(x),y)\enspace .
\]
That is, $\Ecal_{gen}(f,S)$ is the difference between the \defemph{risk} $\Ecal(f)$  and the \defemph{empirical risk} $\Ecal_{emp}(f,S)$ of $f$ on a finite sample set $S$ 
drawn iid.\ according to a \defemph{data distribution} $\Dcal$ on $\Xcal\times\Ycal$. %
To connect flatness to generalization, we start by decomposing the generalisation gap into two terms, a \emph{representativeness} term that quantifies how well a distribution $\Dcal$ can be approximated using distributions with local support around sample points and a \textit{feature robustness} term describing how small changes of feature values affect the model's loss.
Here, feature value refers to the implicitly represented features by the model, i.e., we consider models that can be expressed as $f(x)=\psi(\w,\phi(x))=g(\w\phi(x))$ with a feature extractor $\phi$ and a model $\psi$ (which includes linear and kernel models, as well as most neural networks, see Fig.~\ref{fig:featureDecomposition}).
With this decomposition, we measure the generalization ability of a particular model by how well its interpolation between samples in feature space fits the underlying data distribution.
We then connect feature robustness (a property of the feature space) to flatness (a property of the parameter space) using the following key identity:
Multiplicative perturbations in feature space by arbitrary matrices $A\in\RR^{m\times m}$ correspond to  perturbations in parameter space, i.e.,
\begin{equation}\label{eq:keyEquation}
 \psi(\w, \phi(x)+A\phi(x))=g(\w (\phi(x)+ A\phi(x))) = g((\w+ \w A )\phi(x))=\psi (\w+ \w A,\phi(x))\enspace .
\end{equation} 
\begin{figure}[t]
    \centering
    \begin{minipage}{0.39\textwidth}
        \centering
        \def\layersep{2.5cm}

\definecolor{input}{RGB}{56,54,80}
\definecolor{hidden}{RGB}{0,193,189}
\definecolor{output}{RGB}{179,132,184}

\begin{tikzpicture}[shorten >=1pt,->,draw=black!50, node distance=\layersep, scale=0.585, transform shape]
    \tikzstyle{every pin edge}=[<-,shorten <=1pt]
    \tikzstyle{neuron}=[circle,fill=black!25,minimum size=17pt,inner sep=0pt]
    \tikzstyle{input neuron}=[neuron, fill=input];
    \tikzstyle{output neuron}=[neuron, fill=output];
    \tikzstyle{hidden neuron}=[neuron, fill=hidden];
    \tikzstyle{hidden neuron 2}=[neuron, fill=hidden];
    \tikzstyle{annot} = [text width=4em, text centered]

    \foreach \name / \y in {1,...,4}
        \node[input neuron, pin=left:] (I-\name) at (0,-\y) {};

    \foreach \name / \y in {1,...,5}
        \path[yshift=0.5cm]
            node[hidden neuron] (H1-\name) at (\layersep,-\y cm) {};
        
    \foreach \source in {1,...,4}
    	\foreach \dest in {1,...,5}
    		\path (I-\source) edge (H1-\dest);    
            
    \foreach \name / \y in {1,...,4}
    	\path[yshift=0.0cm]
	    	node[hidden neuron 2] (H2-\name) at (2.2*\layersep,-\y cm) {};

	\foreach \source in {1,...,5}
		\foreach \dest in {1,...,4}
			\path[draw=black!50] (H1-\source) edge (H2-\dest);  

    \node[output neuron,pin={[pin edge={->}]right:}, right of=H2-3] (O) at (2*\layersep,-2.5 cm) {};

    \foreach \source in {1,...,4}
        \path (H2-\source) edge (O);

    \node[annot,node distance=1cm] (hl) at (4,0.6 cm) {hidden layers};
    \node[annot] at (0,0.6 cm) {input layer};
    \node[annot] at (7.5,0.6 cm) {output layer};

    \draw [-,
    thick,
    decoration={
    	brace,
    	mirror,
    	raise=0.3cm,
    	amplitude=10pt
    },
    decorate
    ] (-0.5,-4.3) -- (3,-4.3) 
    node [pos=0.5,anchor=north,yshift=-1.3cm] {$\phi$}; 
    
    \draw [-,
    thick,
    decoration={
    	brace,
    	mirror,
    	raise=0.3cm,
    	amplitude=10pt
    },
    decorate
    ] (3.5,-4.3) -- (8,-4.3) 
    node [pos=0.5,anchor=north,yshift=-1.3cm] {$\psi$}; 
\end{tikzpicture}
        \caption{Decomposition of $f=\psi\circ\phi$ into a feature extractor $\phi$ and a model $\psi$ for neural networks.}
        \label{fig:featureDecomposition}
    \end{minipage}
        \hspace{0.1cm}
    \begin{minipage}{.59\textwidth}
        \centering
	    \includegraphics[width=8cm]{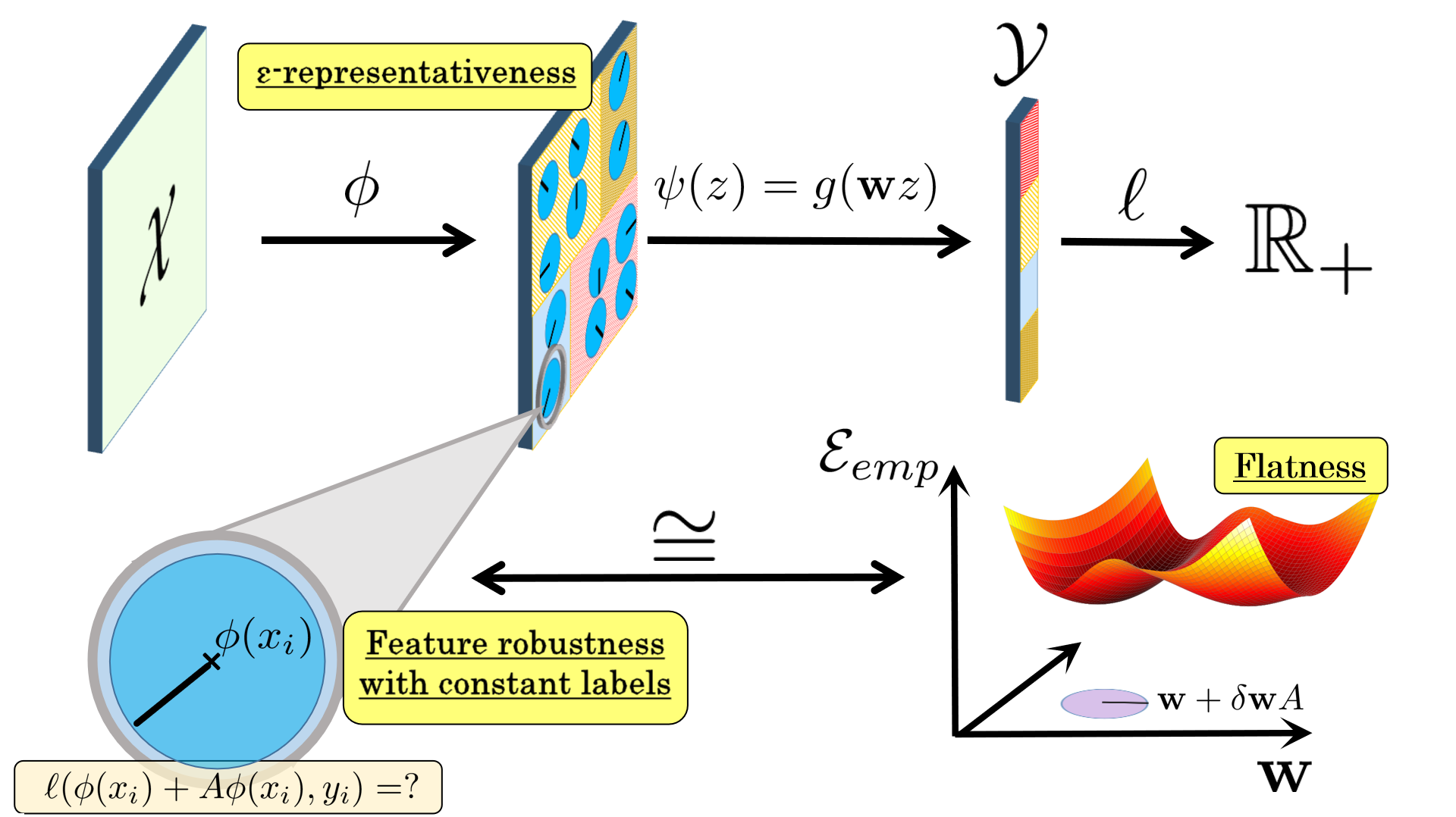}
	    \caption{Overview: We theoretically connect a notion of representative data with a notion of feature robustness and a novel measure of flatness of the loss surface.}
	    \label{fig:overview}
    \end{minipage}
\end{figure}
%
%
Using this key equation, we show that feature robustness is approximated by a novel, but natural, loss Hessian-based \emph{relative flatness} measure under the assumption that the distribution can be approximated by 
locally constant labels. Under this assumption and if the data is representative, then flatness is the main predictor of generalization (see Fig.~\ref{fig:overview} for an illustration).

This offers an explanation for the 
correlation of flatness with generalization on many real-world data distributions for image classification~\citep{jiang2020fantastic,fisherRao, exploringGen}, where the assumption of locally constant labels is reasonable (the definition of adversarial examples~\citep{intriguingProperties} even hinges on this assumption).
This dependence on locally constant labels has not been uncovered by previous theoretical analysis~\citep{tsuzuku, exploringGen}. Moreover, we show that the resulting relative flatness measure is invariant to linear reparameterization and has a stronger correlation with generalization than other flatness measures~\citep{fisherRao, exploringGen,jiang2020fantastic}.
Other measures have been proposed that similarly 
achieve invariance under reparameterizations~\citep{fisherRao, tsuzuku}, but the Fisher-Rao norm~\citep{fisherRao} is lacking a strong theoretical connection to generalization, our measure sustains a more natural form than normalized sharpness~\citep{tsuzuku} and for neural networks, it considers only a single layer, given by the decomposition of $f$ (including the possibility of choosing the input layer when $\phi=id_\Xcal$).
An extended comparison to related work is provided in Appdx.~\ref{sct:related_work}.

The limitations of our analysis are as follows. We assume
a noise-free setting where for each $x\in\Xcal$ there is a unique $y=y(x) \in \Ycal$ such that $P_{x,y\sim D}(y | x) = 1$, and this assumption is also extended to the feature space of the given model, i.e., we assume that $\phi(x)=\phi(x')$ implies $y(x) = y(x')$ for all $x, x' \in \Xcal$ and write $y(x)=y(\phi(x))$. 
Moreover, we assume that the marginal distribution $\Dcal_\Xcal$ 
is described by a density function $p_{\Dcal}(x)$, that $f(x)=\psi(\w,\phi(x))=g(\w\phi(x))$ is a local minimizer of the empirical risk on $S$, and that $g$, $\psi$, $\phi$ are twice differential.
%
Quantifying the representativeness of a dataset precisely is challenging since the data distribution is unknown. Using results from density estimation, we derive a worst-case bound on representativeness for all data distributions that fulfill mild regularity assumptions in feature space $\phi(\Xcal)$, i.e., a smooth density function $p_{\phi(\Dcal)}$ such that $\int_{z\in \phi(\Xcal)}  \left | \nabla^2 \left( p_{\phi(\Dcal)}(z)||z||^2\right ) \right | dz $ and $\int_{z\in\phi(\Xcal)} {p_{\phi(\Dcal)}(z)}/{||z||^m}\ dz$ are well-defined and finite.
This yields a generalization bound incorporating flatness. 
In contrast to the common bounds of statistical learning theory, the bound depends on the feature dimension. The dimension-dependence is a result of the interpolation approach (applying density estimation uniformly over all distributions that satisfy the mild regularity assumptions). The bound is consistent with the no-free-lunch theorem and the convergence rate derived by~\citet{belkinPerfectFit} for a model based on interpolations. 
In practical settings, representativeness can be expected to be much smaller than the worst-case bound, which we demonstrate by a synthetic example in Sec.~\ref{sec:experiments}. 
Generally, it is a bound that remains meaningful in the interpolation regime~\citep{bartlett2020benign, belkin2019reconciling, indcutiveBias}, where traditional measures of generalization based on the empirical risk and model class complexity are uninformative~\citep{understandingRethinking, nagarajan2019uniform}.  

\textbf{Contribution.} In summary, this paper 
rigorously connects flatness of the loss surface to generalization and shows that this connection requires feature representations such that labels are (approximately) locally constant, which is also validated in a synthetic experiment (Sec.~\ref{sec:experiments}). The empirical evaluation shows that this flatness and an approximation to representativeness can tightly bound the generalization gap. 
Our contributions are:
(i) the rigorous connection of flatness and generalization;
(ii) novel notions of representativeness and feature robustness that capture the extent to which a model's interpolation between samples fits the data distribution; and 
(iii) a novel flatness measure that is layer- and neuron-wise reparameterization invariant, reduces to ridge regression for ordinary least squares, and outperforms state-of-the-art flatness measures on CIFAR10.

\section{Representativeness}\label{sct:representativeness}
In this section, we formalize when a sample set $S$ is representative for a data distribution $\Dcal$.

\textbf{Partitioning the input space.} We choose a partition $\{V_i\ |i=1,\ldots,|S|\}$ of $\Xcal$ such that each element of this partition $V_i$ contains exactly one of the samples $x_i$ from $S$. The distribution can then be described by a set of densities $p_i(x)=\frac{1}{\alpha_i}\cdot p_\Dcal(x) \cdot \mathbf{1}_{V_i}(x)$ with support contained in $V_i$ (where $\mathbf{1}_{V_i}(x)=1$ if $x\in V_i$ and $0$ otherwise) and with normalizing factor $\alpha_i=\int_{V_i} p_\Dcal(x)  dx$. Then 
the risk decomposes as $\Ecal(f) = \sum_{i=1}^{|S|} \alpha_i\cdot  \mathbb{E}_{x\sim p_i}[ \loss(f(x),y(x))].$
Since $x_i\in V_i$ for each $i$, we can change variables and consider density functions $\lambda_i^*(\xi)=p_i(x_i+\xi)$ with support in a neighborhood around the origin of $\Xcal$. The risk then decomposes as
\begin{equation}\label{eq:partition}
\Ecal(f)=  \sum_{i=1}^{|S|} \alpha_i \cdot \mathbb{E}_{\xi\sim \lambda_i^*}[ \loss(f(x_i+\xi),y(x_i+\xi))] 
\enspace .\end{equation}
Starting from this identity, we formalize an approximation to the risk: In a practical setting, the distribution $p_\Dcal$ is unknown and hence, in the decomposition \eqref{eq:partition}, we have unknown densities $\lambda_i^*$ and unknown normalization factors $\alpha_i$. We assume that each neighborhood contributes equally to the loss, i.e., we approximate each $\alpha_i$ with $\frac 1{|S|}$. Then, given a sample set $S$ and an $|S|$-tuple $\Lambda = (\lambda_i)_{1\leq i\leq |S|}$ of ``local'' probability density functions on $\Xcal$ with support  ${supp}(\lambda_i)$ in a neighborhood around the origin $0_\Xcal$, we call the pair $(S,\Lambda)$ \defemph{ $\epsilon$-representative for $\Dcal$} with respect to a model $f$ and loss $\loss$ if $|\Ecal_{Rep}(f,S,\Lambda) |\leq \epsilon$, where 
\begin{equation}\label{eq:interpolating}
     \Ecal_{Rep}(f,S,\Lambda)= \Ecal(f)   - \sum_{i=1}^{|S|} \frac 1{|S|}\cdot \mathbb{E}_{\xi \sim \lambda_i}\left[\loss( f(x_i+ \xi),y(x_i+\xi))\right]\enspace .
\end{equation}

If the partitions $V_i$ and the distributions $\lambda_i$ are all chosen optimal so that the approximation $\alpha_i=\frac 1{|S|}$ is exact and $\lambda_i=\lambda_i^*$, then $\Ecal_{Rep}(f,S,\Lambda)=0$ by \eqref{eq:partition}. If the support of each $\lambda_i$ is decreased to the origin so that $\lambda_i=\delta_{0}$ is a Dirac delta function, then $\Ecal_{Rep}(f,S,\Lambda)=\Ecal_{gen}(f,S)$ equals the generalization gap.
For density functions with an intermediate support, the generalization gap can be decomposed into representativeness and the expected deviation of the loss around the sample points:
\[\Ecal_{gen}(f,S) = \Ecal_{Rep}(f,S,\Lambda)+  \sum_{i=1}^{|S|} \frac 1{|S|} \cdot \mathbb{E}_{\xi\sim \lambda_i}[ \loss(f(x_i+\xi),y(x_i+\xi))-\loss(f(x_i),y_i)]\]
The main idea of our approach to understand generalization is to use this equality and to control both representativeness and expected loss deviations for a suitable $|S|$-tuple of distributions $\Lambda$.

\textbf{From input to feature space.} An interesting aspect of $\epsilon$-representativeness is that it can be considered in a feature space instead of the input space. For a model $f=(\psi\circ \phi) :\Xcal\rightarrow \Ycal$, we can apply
our notion to the feature space $\phi(\Xcal)$ (see Fig.~\ref{fig:featureDecomposition} for an illustration). This leads to the notion of $\epsilon$-representativeness in feature space defined for an $|S|$-tuple $\Lambda^\phi =(\lambda^\phi_i)_{1\leq i\leq |S|}$ of densities on $\phi(\Xcal)$ by replacing $x_i$ with $\phi(x_i)$ in \eqref{eq:interpolating}, which we denote by  $\Ecal_{Rep}^\phi(f,S,\Lambda^\phi)$. By measuring representativeness in a feature space, this becomes a notion of both data and feature representation. In particular, it assumes that a target output function $y(\phi(x))$ also exists for the feature space.
We can then decompose the generalization gap  $\Ecal_{gen}(f)$ of $f=(\psi\circ \phi)$ into
\[ \Ecal_{Rep}^\phi(f,S,\Lambda^\phi) + \left(\frac 1{|S|} \sum_{i=1}^{|S|} \mathbb{E}_{\xi \sim \lambda_i^\phi}\left[\loss( \psi(\phi(x_i)+ \xi),y(\phi(x_i)+\xi))- \loss(f(x_i),y_i) \right] \right ) \]
The second term is determined by how the loss changes under small perturbations in the feature space for the samples in $S$.  As before,  for $\lambda_i=\delta_0$  the term in the bracket vanishes and $\Ecal_{Rep}^\phi(f,S,\Lambda^\phi)=\Ecal_{gen}$.  But the decomposition becomes more interesting for distributions with support of nonzero measure around the origin. 
 If the true distribution can be interpolated efficiently in feature space from the samples in $S$ with suitable $\lambda_i^\phi$ so that  $\Ecal_{Rep}^\phi(f,S,\Lambda^\phi)\approx 0$, then the term in the bracket approximately equals the generalization gap and the generalization gap can be estimated from local properties in feature space  around sample points. 

\section{Feature Robustness}\label{sec:featureRobustness}
Having decomposed the generalisation gap into a representativness and a second term of loss deviation, we now develop a novel notion of feature robustness that is able to bound the second term for specific families of distributions $\Lambda$ using key equation~\eqref{eq:keyEquation}. 
Our definition of \defemph{feature robustness} for a model $f=(\psi\circ \phi) :\Xcal\rightarrow \Ycal$ depends on a small number $\delta>0$, a sample set $S$ and a \defemph{feature selection} defined by a matrix $A\in \RR^{m\times m}$ of \defemph{operator norm} $||A||\leq 1$.  With feature perturbations $\phi_A(x)=(I+A)\phi(x)$ and
\begin{equation}\label{eq:defRobustness}
\begin{split}
\Ecal_{\mathcal{F}}^\phi(f, S, A):= \frac 1{|S|} \sum_{i=1}^{|S|} &\Big[ \loss(\psi( \phi_A(x_i)),\ y[\phi_A(x_i)]) -\loss(f(x_i),y_i)\Big],
\end{split}
\end{equation}
the definition of feature robustness is given as follows.
\begin{definition}\label{def:featureRobustness}
Let $\loss:\Ycal\times \Ycal\rightarrow \RR_+$ denote a loss function, $\epsilon$ and $\delta$ two positive (small) real numbers, $S \subseteq \Xcal\times \Ycal$ a finite sample set, and $A\in \RR^{m\times m}$ a matrix. A model $f(x)=(\psi\circ \phi)(x)$ with $\phi(\Xcal)\subseteq\RR^m$ 
is called $\pmb{\left ( (\delta,S,A),\epsilon\right )}$\defemph{-feature robust}, if $\left |\Ecal_{\mathcal{F}}^\phi(f,S, \alpha A)\right |\leq \epsilon\ \textrm{ for all } 0\leq \alpha\leq \delta.$ More generally, for a probability distribution $\mathcal{A}$ on perturbation matrices in $\RR^m$, we define 
\[\Ecal_{\mathcal{F}}^\phi(f,S,\Acal) = \Exp{A\sim \mathcal{A}}{ \Ecal_{\mathcal{F}}^\phi(f,S, A)} \enspace ,
\]
and call the model $\pmb{\left ( (\delta,S,\mathcal{A}),\epsilon\right )}$\defemph{-feature robust on average over $\mathcal{A}$}, if $\left |\Ecal_{\mathcal{F}}^\phi(f,S,\alpha\Acal)\right | \leq \epsilon$ for $0\leq \alpha\leq \delta$.
\end{definition}

Given a feature extractor $\phi$, feature robustness measures the performance of $\psi$ when feature values are perturbed (with constant feature extractor $\phi$). This local robustness at sample points differs from the robustness of~\citet{robustnessGeneralization} that requires a data-independent partitioning of the input space. The matrix $A$ in feature robustness determines which feature values shall be perturbed. For each sample, the perturbation is linear in the expression of the feature. Thereby, we only perturb features that are relevant for the output for a given sample and leave feature values unchanged that are not expressed.  For $\phi$ mapping into an intermediate layer of a neural network, traditionally, the activation values of a neuron are considered as feature values, which corresponds to a choice of $A$ as a projection matrix. 
However, it was shown by \citet{intriguingProperties} that, for any other direction $v\in \RR^m, ||v||=1$, the values $\langle \phi(x), v \rangle$ obtained from the projection $\phi(x)$ onto $v$, can be likewise semantically interpreted as a feature. This motivates the consideration of general \defemph{feature matrices} $A$.


\paragraph{Distributions on feature matrices induce distributions on the feature space 
}\label{sct:inducingDistributions} Feature robustness is  defined in terms of feature matrices (suitable for an application of~\eqref{eq:keyEquation} to connect perturbations of features with perturbations of weights), while the approach exploiting representative data from Section~\ref{sct:representativeness} considers distributions on feature vectors, cf.~\eqref{eq:interpolating}. To connect feature robustness to the notion of $\epsilon$-representativeness, we specify for any distribution $\Acal$ on matrices $A\in \RR^{m\times m}$ an $|S|$-tuple $\Lambda_{\mathcal{A}}=(\lambda_i)$ of probability density functions $\lambda_i$ on the feature space $\RR^m$ with support containing the origin. Multiplication of a feature matrix with a feature vector $\phi(x_i)$ defines a feature selection $A\phi(x_i)$, and for each $z\in \RR^m$ there is some feature matrix $A$ with $\phi(x_i)+z=\phi(x_i)+A\phi(x_i)$ (unless $\phi(x_i)=0$). Our choice for distributions $\lambda_i$ on $\RR^m$ are therefore distributions that are induced via multiplication of feature vectors $\phi(x_i)\in \RR^m$ with matrices $A\in \RR^{m\times m}$ sampled from a distribution on feature matrices $\Acal$ . Formally, we assume that a Borel measure $\mu_A$ is defined by a probability distribution $\mathcal{A}$ on matrices $\RR^{m\times m}$. We then define Borel measures $\mu_i$ on $\RR^m$ by $\mu_i(C)=\mu_A(\{ A\ |\ A\phi(x_i)\in C\})$ for Borel sets $C\subseteq \RR^m$. Then $\lambda_i$ is the probability density function defined by the Borel measure $\mu_i$. As a result, we have for each $i$ that
\[ \Exp{A \sim \mathcal{A}}{\loss(\psi(\phi_A(x_i)), y(\phi_A(x_i) ))} =  \Exp{z \sim \lambda_i}{\loss(\psi(\phi(x_i)+  z), y(\phi(x_i)+z))}\]

\paragraph{Feature robustness and generalization.}
With this construction and a distribution $\Lambda_{\mathcal{A}}$ on the feature space induced by a distribution $\mathcal{A}$ on feature matrices, we have that
\begin{equation}\label{eq:splitError}
\Ecal(f) = \Ecal_{emp}(f,S) + \Ecal_{Rep}^\phi(f,S,\Lambda_{\mathcal{A}}) + \Ecal_{\mathcal{F}}^\phi(f,S, \Acal)
\end{equation}
Here, $\mathcal{A}$ can be any distribution on feature matrices, which can be chosen suitably to control how well the corresponding mixture of local distributions approximates the true distribution. The third term then measures how robust the model is in expectation over feature changes for $A\sim\mathcal{A}$.   In particular, if $\Ecal_{Rep}^\phi(f,S,\Lambda_{\mathcal{A}})\approx 0$, then $\Ecal_{gen}(f,S)\approx \Ecal_{\mathcal{F}}^\phi(f,S, \Acal)$ and the generalization gap is determined by feature robustness. We end this section by illustrating how distributions on feature matrices induce natural distributions on the feature space. The example will serve in Sec.~\ref{sec:FRandGen} to deduce a bound on $\Ecal_{Rep}^\phi(f,S,\Lambda_{\mathcal{A}})$ from kernel density estimation.

\paragraph{Example: Truncated isotropic normal distributions}\hspace{-0.2cm}are induced by a suitable distribution on feature matrices.
We consider probability distributions $\mathcal{K}_{\delta ||\phi(x_i)||}$
on feature vectors $z\in\RR^m$ in the feature space defined by densities $k_{\delta||\phi(x_i)||}(0,z)$ with smooth rotation-invariant kernels, bounded support and bandwidth $h$:
\begin{equation}\label{eq:kernel} k_h(z_i,z)= \frac{1}{h^m}\cdot k \left (\frac{||z_i-z||}{h} \right)  \cdot \mathds{1}_{||z_i-z||<h}  \end{equation}
with $\mathds{1}_{||z_i-z||<h}=1$ when $||z-z_i||< h$ and $0$ otherwise, and such that $\int_{z\in \RR^m} k_h(z_0,z)\ dz=1\text{ for all $z_0$} .$ 
An example for such a kernel is a truncated isotropic normal distribution with variance $h^2 \sigma^2I$, 
$k_h(z_i,z)= \mathcal{N}(z_i, h^2\sigma^2)(z)$.
The following result states that the densities in \eqref{eq:kernel} can indeed be induced by distributions on feature matrices, which will enable us to connect feature robustness with $\epsilon$-representativeness. 

\begin{proposition}\label{prp:lambdaA}
Let $S^\phi=\{\phi(x_i)\ |x_i\in S\}$ be a set of feature vectors in $\RR^m$. With $k_h$ defined as in \eqref{eq:kernel}, let $\lambda_i(z)=k_{\delta||\phi(x_i)||}(0,z)$ define an $|S|$-tuple $\Lambda_{\delta}$ of densities. Then there exists a distribution $\mathcal{A}_\delta$ on matrices in $\RR^{m\times m}$ of norm less than $\delta$
such that for each $i=1,\ldots,|S|$,
\[ \Exp{A\sim \mathcal{A}_\delta}{  \loss(\psi(\phi_A(x_i)),y(\phi_A(x_i)))}= \mathbb{E}_{\xi \sim \lambda_i}\Big[\loss( \psi(\phi(x_i)+ \xi),y(\phi(x_i)+\xi)) \Big] \]
\end{proposition}

The technical proof is deferred to the appendix, but we describe the distribution $\mathcal{A}_\delta$ on matrices for later use: The desired distribution is defined on the set of matrices of the form $rO$ for a real number $r$ and an orthogonal matrix $O$ (i.e. $OO^T=O^TO=I$) as a product measure combining the (unique) Haar measure on the set of orthogonal matrices $\mathcal{O}(m)$ with a suitable distribution on $\RR$. The Haar measure on $\mathcal{O}(m)$ induces the uniform measure on a sphere of radius $r$ via multiplication with a vector of length $r$ \citep{Krantz}, and we choose a  measure on $\RR$ to match the radial change of the kernel $k_h$.

\section{Relative Flatness of the Loss Surface}\label{sct:FRandFlatness}
Flatness is a property of the parameter space quantifying the change in loss under small parameter perturbations, classically measured by the trace of the loss Hessian $Tr(H)$, where $H$ is the matrix containing the partial second derivatives of the empirical risk with respect to  all parameters of the model. 
%
In order to connect feature robustness (a property of the feature space) to flatness, 
we present how key equation \eqref{eq:keyEquation} translates to the empirical risk:
For a model $f(x,\w)=\psi(\w,\phi(x))=g(\w\phi(x))$ with parameters $\w\in \RR^{d\times m}$ and $g:\RR^d\rightarrow \Ycal$ a function on a matrix product of parameters $\w$ and a feature representation $\phi:\Xcal\rightarrow \RR^m$ and any feature matrix $A\in\RR^{m\times m}$ we have that 
\begin{equation}\label{eq:InputToParameters}
\begin{split}
    \Ecal_{emp}(\w+ &\w A,\phi(S)) =\frac 1{|S|} \sum_{i=1}^{|S|} \loss( \psi (\w+ \w A,\phi(x_i)),y_i) \\
    &= \frac 1{|S|} \sum_{i=1}^{|S|} \loss(\psi(\w, \phi(x_i)+A\phi(x_i)),y_i) = \frac 1{|S|} \sum_{i=1}^{|S|} \loss(\psi(\w, \phi_A(x_i)),y_i)
\end{split}
\end{equation}
Subtracting $\Ecal_{emp}(\w,\phi(S))=\frac 1{|S|} \sum_{i=1}^{|S|} \loss(\psi(\w, \phi(x_i)),y_i)$, we can recognize feature robustness \eqref{eq:defRobustness} on the right side of this equality when labels are constant under perturbations of the features, i.e.\ $y(\phi_A(x_i))=y_i$. In other words, flatness $\Ecal_{emp}(\w+ \mathbf{v},\phi(S))-\Ecal_{emp}(\w,\phi(S))$ 
describes the performance of a model function on perturbed feature vectors while holding labels constant. 
We proceed to introduce a novel, but natural, loss Hessian-based flatness measure that approximates feature robustness, given that the underlying data distribution $\Dcal$ satisfies the assumption of locally constant labels.

With $\w_s=(w_{s,t})_t \in \RR^{1\times m}$ denoting the $s$-th row of the parameter matrix $\w$, we let $H_{s,s'}(\w,\phi(S))\in\RR^{m\times m}$ denote the Hessian matrix containing all partial second derivatives of the empirical risk $\Ecal_{emp}(\w,\phi(S))$ with respect to weights in rows $\w_s$ and $\w_{s'}$,  i.e. 
\begin{equation}\label{eq:hessians}
    H_{s,s'}(\w,\phi(S))=\left [\frac{\partial^2 \mathcal{E}_{emp}(\w,\phi(S))}{\partial w_{s,t} \partial w_{s',t'}} \right ]_{1\leq t,t'\leq m}.
\end{equation}

\begin{definition}\label{def:flatnessMeasure}
For a model $f(x,\w)=g(\w\phi(x))$, $\w\in\RR^{d\times m}$, with a twice differentiable function $g$, a twice differentiable loss function $\loss$ and a sample set $S$, relative flatness is defined by
 \begin{equation}\label{eq:defFlatness}
 \kappa^\phi_{Tr}(\w) := \sum_{s,s'=1}^d \langle \w_s,\w_{s'}\rangle  \cdot Tr(H_{s,s'}(\w,\phi(S))),
\end{equation}
where  $Tr$ denote the  trace and  $\langle \w_s,\w_{s'}\rangle=\w_s\w_{s'}^T$ the scalar product of two row vectors.

\end{definition}


\paragraph{Properties of relative flatness} 
(i) Relative flatness simplifies to ridge regression for linear models $f(x,\w)=\w x\in\RR$ ($\Xcal=\RR^d$, $g=id$ and $\phi=id$) and squared loss: To see this, note that for any loss function $\loss$, the second derivatives with respect to the parameters $\w\in \RR^d$ computes to 
$\frac{\partial^2 \loss}{\partial w_i \partial w_j} = \frac{\partial^2 \loss}{\partial (f(x,\w))^2} x_i x_j.$
For $\loss(\hat y,y)=||\hat y-y||^2$ the squared loss function, $\sfrac{\partial^2 \loss}{\partial \hat y^2}=2$ and the Hessian is independent of the parameters $\w$. In this case, $\kappa_{Tr}^{id}=c\cdot ||\vvec \w||^2$ with a constant $c=\sum_{x\in S}2Tr(xx^T)$, which is the well-known Tikhonov (ridge) regression penalty.

(ii) Invariance under reparameterization: We consider neural network functions 
\begin{equation}
f(x)=\w^L \sigma (\ldots \sigma(\w^2 \sigma(\w^1 x+b^1)+b^2)\ldots )+b^L
\end{equation} of a neural network of $L$ layers with nonlinear activation function $\sigma$. 
By letting $\phi^l(x)$
denote the composition of the first $l-1$ layers, 
we obtain a decomposition $f(x,\w^l)=g^l(\w^l \phi^l(x))$ of the network. 
Using \eqref{eq:defFlatness} we obtain a relative flatness measure $\kappa^l_{Tr}(\w)$ 
for the chosen layer.

For a well-defined Hessian of the loss function, we require the network function to be twice differentiable. With the usual adjustments (equations only hold almost everywhere in parameter space), we can also consider neural networks with ReLU activation functions. 
In this case, \citet{dinhSharp} noted that the network function ---and with it the generalization performance--- remains unchanged under linear reparameterization, i.e., multiplying layer $l$ with $\alpha>0$ and dividing layer $k\neq l$ by $\alpha$, but common measures of the loss Hessian change. 
%
%
Our measure fixes this issue in relating flatness to generalization since the change of the loss Hessian is compensated by multiplication with the scalar products of weight matrices and is therefore invariant under layer-wise reparameterizations~\citep[cf.][]{exploringGen}. It is also invariant to neuron-wise reparameterizations, i.e., multiplying all incoming weights into a neuron by a positive number $\alpha$ and dividing all outgoing weights by $\alpha$~\citep{pathSGD}, except for neuron-wise reparameterizations of the feature layer $\phi^l$. Using a simple preprocessing step (a neuron-wise reparameterization with the variance over the sample)
, our proposed measure becomes independent of all neuron-wise reparameterizations.

\begin{theorem}\label{thm:reparameterizations}
Let $\sigma_i$ denote the variance of the i-th coordinate of $\phi^l(x)$ over 
samples $x\in S$ and $V=diag\left ( {\sigma_1},\ldots, {\sigma_{n_{l-1}}}\right )$. If the relative flatness measure $\kappa^l_{Tr}$ is applied to the representation 
\begin{equation*}f(x)=\w^L \sigma (\ldots \sigma(\w^lV\ \sigma(V^{-1}\w^{l-1}\sigma(\ldots  \sigma(\w^1 x+b^1))\ldots)+V^{-1}b^{l-1})+b^l)\ldots )+b^L\end{equation*}
then $\kappa^l_{Tr}$ is invariant under all neuron-wise (and layer-wise) reparameterizations 
\end{theorem}

We now connect flatness with feature robustness: Relative flatness approximates feature robustness for a model at a local minimum of the empirical risk, when labels are approximately constant in neighborhoods of the training samples $(\phi(x),y)\in \phi(S)$ in feature space. 

\begin{theorem}\label{thm:averaging}
Consider a model $f(x,\w)=g(\w\phi(x))$ as above, a loss function $\loss$ and a sample set $S$, and let $O_m\subset \RR^{m\times m}$ denote the set of orthogonal matrices. Let $\delta$ be a positive (small) real number and $\w=\omega\in\RR^{d\times m}$ denote parameters at a local minimum of the empirical risk on a sample set $S$. If the labels satisfy that 
$y(\phi_{\delta A}(x_i))=y(\phi(x_i))=y_i$
for all $(x_i,y_i)\in S$ and all $||A||\leq 1$, then 
 $f(x,\omega)$ is $\left ( (\delta, S, O_m), \epsilon \right )$-feature robust on average over $O_m$  for $\epsilon= \frac{\delta^2}{2m} \kappa^\phi_{Tr}(\omega)+\mathcal{O}(\delta^3)$. 
\end{theorem}
Applying the theorem to Eq.~\ref{eq:splitError} implies that if the data is representative, i.e., $\Ecal_{Rep}^\phi(f,S,\Lambda_{\mathcal{A}_\delta})\approx 0$ for the distribution $\mathcal{A}_\delta$ of Prop.~\ref{prp:lambdaA}, then $\Ecal_{gen}(f(\cdot, \omega),S)\lesssim \frac{\delta^2}{2m} \kappa^\phi_{Tr}(\omega)+\mathcal{O}(\delta^3)$.
The assumption on locally constant labels in Thm.~\ref{thm:averaging} can be relaxed to approximately locally constant labels without unraveling the theoretical connection between flatness and feature robustness. Appendix~\ref{appdx:locallabelChanges} investigates consequences from even dropping the assumption of approximately locally constant labels.

\section{Flatness and Generalization}\label{sec:FRandGen}
Combining the results from sections~\ref{sct:representativeness}--\ref{sct:FRandFlatness}, we connect flatness to the generalization gap when the distribution can be represented by smooth probability densities on a feature space with approximately locally constant labels. By approximately locally constant labels we mean that, for small $\delta$, the loss in $\delta||\phi(x_i)||$-neighborhoods around the feature vector of a training sample $x_i$ is approximated (on average over all training samples) by the loss for constant label $y(x_i)$ on these neighborhoods. 
This and the following theorem connecting flatness and generalization are made precise in Appendix~\ref{proof:thm6}.

\begin{theorem}[\textit{informal}]\label{thm:FlatnessGenBound} Consider a model $f(x,\w)=g(\w\phi(x))$ as above, a loss function $\loss$ and a sample set $S$,  let $m$ denote the dimension of the feature space defined by $\phi$ and let $\delta$ be a positive (small) real number. Let $\omega$ denote a local minimizer of the empirical risk on a sample set $S$. If the distribution $\Dcal$ has a smooth density $p^\phi_\Dcal$ on the feature space $\RR^m$ with approximately locally constant labels around the points $x\in S$, then it holds with probability $1-\Delta$ over sample sets $S$ that
\[\Ecal_{gen}(f(\cdot, \omega),S)\lesssim |S|^{-\frac{2}{4+m}} \left ( \frac{\kappa^\phi_{Tr}(\omega)}{2m}  +  C_1(p^\phi_\Dcal,L)+\frac{C_2(p^\phi_\Dcal,L)}{\sqrt{\Delta}} \right ) \] 
up to higher orders in $|S|^{-1}$ for constants $C_1,C_2$ that depend only on the distribution in feature space $p^\phi_\Dcal$ induced by $\phi$, the chosen $|S|$-tuple $\Lambda_{\delta}$ and the maximal loss $L$.
\end{theorem}


To prove Theorem~\ref{thm:FlatnessGenBound} we bound both $\epsilon$-representativeness and feature robustness in Eq.~\ref{eq:splitError}. For that, the main idea is that the family of distributions considered in Proposition~\ref{prp:lambdaA} has three key properties: (i) it provides an explicit link between the distributions on feature matrices $\mathcal{A}_\delta$ 
used in feature robustness and the family of distributions $\Lambda_\delta$ 
of $\epsilon$-representativeness (Proposition~\ref{prp:lambdaA}) (ii) it allows us to bound feature robustness using Thm.~\ref{thm:averaging}; and (iii) it is simple enough that it allows us to use standard results of kernel density estimation (KDE) to bound representativeness.

Our bound suffers from the curse of dimensionality, but for the chosen feature space instead of the (usually much larger) input space. The dependence on the dimension is a result of using KDE uniformly over all distributions satisfying mild regularity assumptions. In practice, for a given distribution and sample set $S$, representativeness can be much smaller, which we showcase in a toy example in Sec.~\ref{sec:experiments}. In the so-called interpolation regime, where datasets with arbitrarily randomized labels can be fit by the model class, the obtained convergence rate is consistent with the no free lunch theorem and the convergence rate derived by \citet{belkinPerfectFit} for an interpolation technique using nearest neighbors.
 
A combination of our approach with prior assumptions on the hypotheses or the algorithm in accordance to statistical learning theory could potentially achieve faster convergence rate. Our herein presented theory is instead based solely on interpolation and aims to understand the role of flatness (a local property) in generalization: If the data is representative in feature layers and if the distribution can be approximated by locally constant labels in these layers, then flatness of the empirical risk surface approximates the generalization gap. Conversely, Equation~\ref{eq:InputToParameters} shows that flatness measures the performance under perturbed features only when labels are kept constant. As a result, we offer an explanation for the often observed correlation between flatness and generalization: Real-world data distributions for classification are benign in the sense that small perturbations in feature layers do not change the target class, i.e., they can be approximated by locally constant labels. (Note that the definition of adversarial examples hinges on this assumption of locally constant labels.) In that case, feature robustness is approximated by flatness of the loss surface. If the given data and its feature representation are further $\epsilon$-representative for small $\epsilon\approx 0$, then flatness becomes the main contributor to the generalization gap leading to their noisy, but steady, correlation. 
\section{Empirical Validation}\label{sec:experiments}
\begin{figure}[t]
    \centering
    \begin{minipage}{.49\textwidth}
        \centering
        \includegraphics[height=4.3cm]{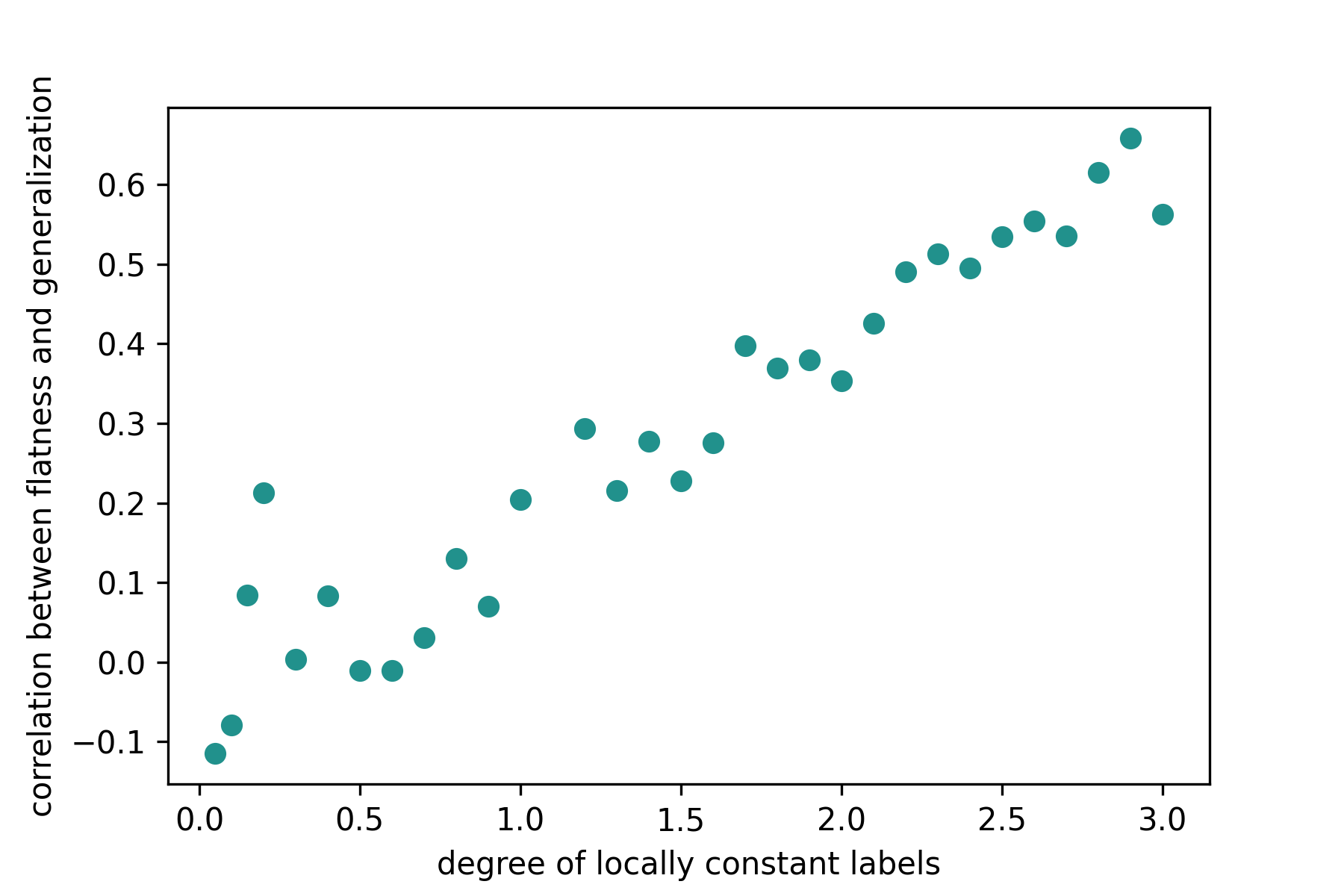}
        \caption{The correlation between flatness and generalization increases with the degree of locally constant labels.}
        \label{fig:locallyConstantLabels}
    \end{minipage}
    \hspace{0.1cm}
    \begin{minipage}{0.49\textwidth}
        \centering
        \includegraphics[height=4.3cm]{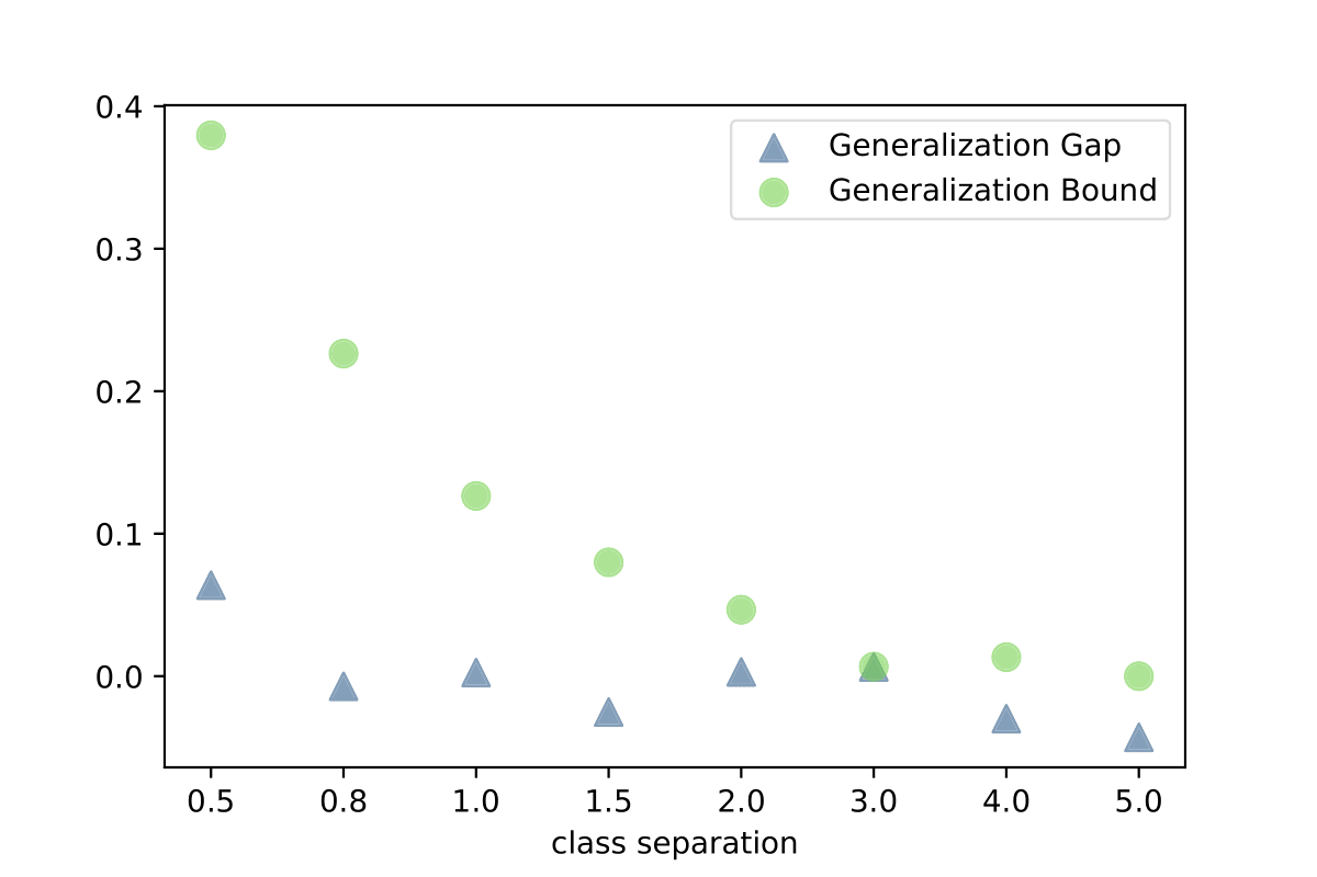}
    \caption{Approximation of representativeness via KDE together with relative flatness leads to a tight generalization bound.}
    \label{fig:approxRepresentativeness}
    \end{minipage}
\end{figure}
We empirically validate the assumptions and consequences of the theoretical results derived above~\footnote{~Code is available at~\url{\experimentURL}.}. For that, we first show on a synthetic example that the empirical correlation between flatness and generalization decreases if labels are not locally constant, up to a point when they are not correlated anymore. We then show that the novel relative flatness measure correlates strongly with generalization, also in the presence of reparameterizations. Finally, we show in a synthetic experiment that while representativeness cannot be computed without knowing the true data distribution, it can in practice be approximated. This approximation---although technically not a bound anymore---tightly bounds the generalization gap.
Synthetic data distributions for binary classification are generated by sampling $4$ Gaussian distributions in feature space (two for each class) with a given distance between their means (class separation). We then sample a dataset in feature space $S^\phi$, train a linear classifier $\psi$ on the sample, randomly draw the weights of a 4-layer MLP $\phi$, and generate the input data as $S=(\phi^{-1}(S^\phi_x),S^\phi_y)$. This yields a dataset $S$ and a model $f=\phi\circ\psi$ such that $\phi(S)$ has a given class separation. 
Details on the experiments are provided in Appdx.~\ref{appdx:experiments_details}.

\textbf{Locally constant labels:} To validate the necessity of locally constant labels, we measure the correlation between the proposed relative flatness measure and the generalization gap for varying degrees of locally constant labels, as measured by the class separation on the synthetic datasets.
For each chosen class separation, we sample $100$ random datasets of size $500$ on which we measure relative flatness and the generalization gap. Fig.~\ref{fig:locallyConstantLabels} shows the average correlation for different degrees of locally constant labels, showing that the higher the degree, the more correlated flatness is with generalization. If labels are not locally constant, flatness does not correlate with generalization.

\begin{wrapfigure}{r}{9cm}
        \centering
        \includegraphics[width=9cm]{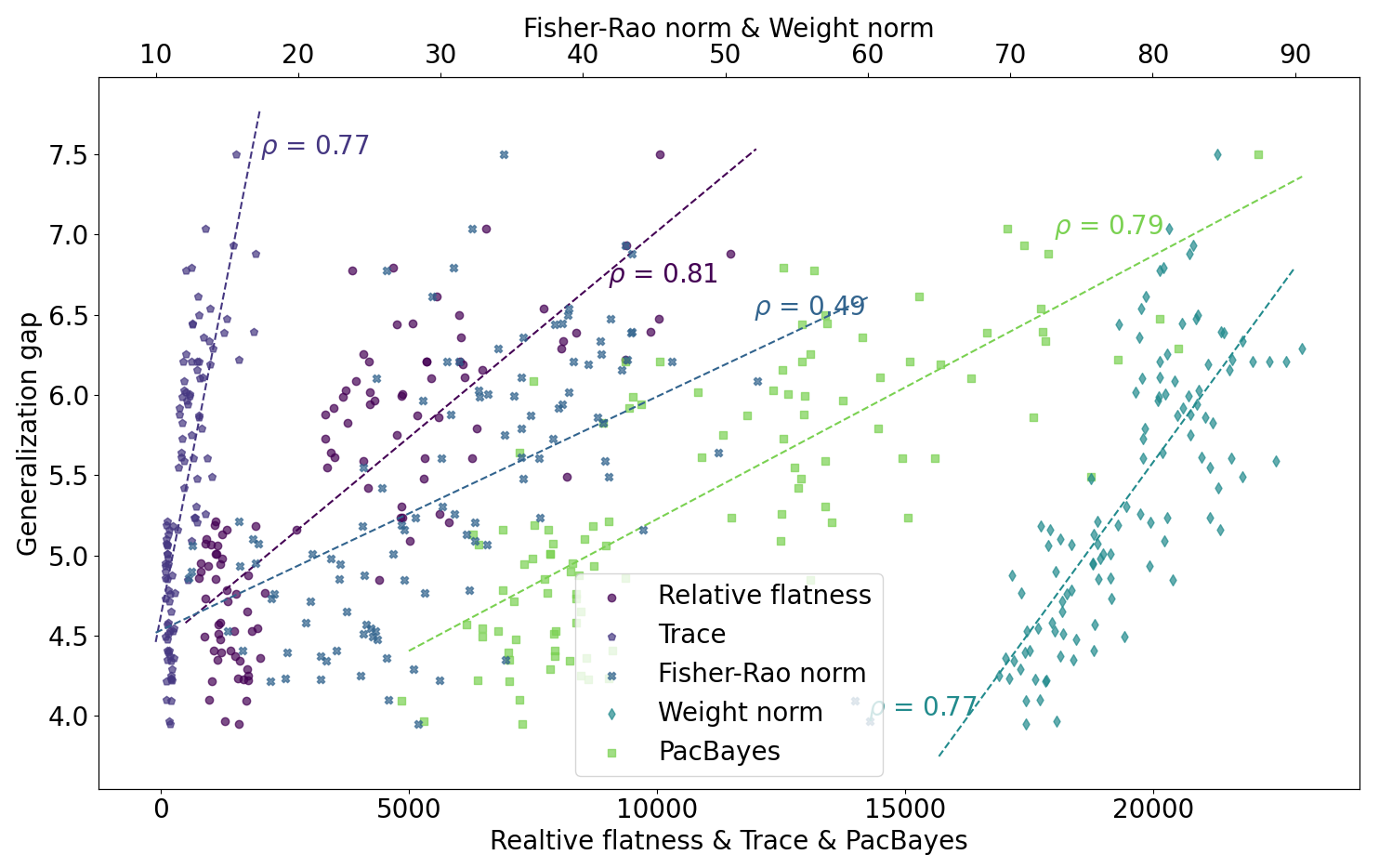}
        \caption{The generalization gap for various local minima correlates stronger with relative flatness than standard flatness, Fisher-Rao norm, PacBayes based measure and weights norm (points corresp. to local minima).}
        \label{fig:relativeFlatness}
\end{wrapfigure}
\textbf{Approximating representativeness:} While representativeness cannot be calculated without knowing the data distribution, it can be approximated from the training sample $S$ by the error of a density estimation on that sample. For that, we use multiple random splits of $S$ into a training set $S_{\text{train}}$ and a test set $S_{\text{test}}$, train a kernel density estimation on $S_{\text{train}}$ and measure its error on $S_{\text{test}}$. Again, details can be found in Appx.~\ref{appdx:experiments_details}. The lower the class separation of the synthetic datasets, the harder the learning problem and the less representative a random sample will be. For each sample and its distribution, we compute the generalization gap and the approximation to the generalization bound.
The results in Fig.~\ref{fig:approxRepresentativeness} show that the approximated generalization bound tightly bounds the generalization error (note that this approximation is technically not a bound anymore). Moreover, as expected, the bound decreases the easier the learning problems become.

\textbf{Relative flatness correlates with generalization:} We validate the correlation of relative flatness to the generalization gap in practice by measuring it for $110$ different local minima---achieved via different learning setups, such as initialization, learning rate, batch size, and optimization algorithm
---of LeNet5~\citep{lecun-mnisthandwrittendigit-2010} on CIFAR10~\citep{Krizhevsky09learningmultiple}. We compare this correlation to the classical Hessian-based flatness measures using the trace of the loss-Hessian, the Fisher-Rao norm~\cite{fisherRao}, the PACBayes flatness measure that performed best in the extensive study of~\citet{jiang2020fantastic} and the $L_2$-norm of the weights.
The results in Fig.~\ref{fig:relativeFlatness} show that indeed relative flatness has 
higher correlation than all the competing measures.
Of these measures, only the Fisher-Rao norm is reparameterization invariant but shows the weakest correlation in the experiment. In Appdx~\ref{appdx:experiments_details} we show how reparameterizations of the network significantly reduce the correlation for non-reparameterization invariant measures.

\section{Discussion and Conclusion}
Contributing to the trustworthiness of machine learning, this paper provides a rigorous connection between flatness and generalization.
 As to be expected for a local property, our association between flatness and generalization requires the samples and its representation in feature layers to be representative for the target distribution. But our derivation uncovers a second, usually overlooked condition. Flatness of the loss surface measures the performance of a model close to training points when labels are kept locally constant. If a data distribution violates this, then flatness cannot be a good indicator for generalization. 

Whenever we consider feature representations other than the input features, the derivation of our results makes one strong assumption: the existence of a target output function $y(\phi(x))$ on the feature space $\phi(\Xcal)$. By moving assumptions on the distribution from the input space to the feature space, we achieve a bound based on interpolation that depends on the dimension of the feature layer instead of the input space. Hence, we assume that the feature representation is reasonable and does not lose information that is necessary for predicting the output. To achieve faster convergence rates independent of any involved dimensions, future work could aim to combine our approach of interpolation with a prior-based approach of statistical learning theory.

Our measure of relative flatness may still be improved in future work. Better estimates for the generalization gap are possible by improving the representativeness of local distributions in two ways: The support shape of the local distributions can be improved and their volume-parameter $\delta$ can be optimally chosen.
Both improvements will affect the derivation of the measure of relative flatness as an estimation of feature robustness for the corresponding distributions on feature matrices. Whereas different support shapes change the trace to a weighted average of the Hessian eigenvalues,
the volume parameter can provide a correcting scaling factor. 
Both approaches seem promising to us, as our relative measure from Definition~\ref{def:flatnessMeasure} already outperforms the competing measures of flatness in our empirical validation.

\section*{Acknowledgements}
Cristian Sminchisescu was supported by the European Research Council Consolidator grant SEED, CNCS-UEFISCDI (PN-III-P4-ID-PCE-2016-0535, PN-III-P4-ID-PCCF-2016-0180), the EU Horizon 2020 grant DE-ENIGMA (688835), and SSF.

Mario Boley was supported by the Australian Research Council (under DP210100045).

We would like to thank Julia Rosenzweig, Dorina Weichert, Jilles Vreeken, Thomas G\"artner, Asja Fischer, Tatjana Turova and Alexandru Aleman for the great discussions.



\bibliographystyle{plainnat}
\bibliography{bibliography}

\newpage




\appendix

\setcounter{theorem}{6}
\setcounter{equation}{7}

\noindent
\paragraph{Organization of the Appendix}

The appendix is organized as follows:\\

\textbf{\ref{sct:related_work} -- Related work} contains an extended discussion on related work.

\textbf{\ref{appdx:locallabelChanges} -- The effect of local label changes} discusses consequences for the association of flatness and generalization for general output function $y(x)$ without the assumption of locally constant labels.

\textbf{\ref{appdx:experiments_details} -- Details on the Empirical Validation} contains a detailed description of the experiments.

\textbf{\ref{appdx:proofs} -- Proofs} contains the full proofs to all statements. In detail:

{\setlength{\parindent}{5ex}\ref{proof:prp2}: Proposition~\ref{prp:lambdaA},

\vspace{-4pt} \ref{proof:thm4}: Theorem~\ref{thm:reparameterizations},

\vspace{-4pt} \ref{proof:thm5}: Theorem~\ref{thm:averaging},
 
\vspace{-4pt} \ref{proof:thm6}: Theorem~\ref{thm:FlatnessGenBound}.}


\textbf{\ref{appdx:maxFlatness} -- Relative flatness for a uniform bound over general distributions on feature matrices} defines a variant of relative flatness that uniformly bounds feature robustness over all feature matrices.

\section{Related Work}\label{sct:related_work}
It has long been observed that algorithms searching for flat minima of the loss curve lead to better generalization~\citep{flatMinima, simplifyingByFlat}. More recently, an association between flatness and low generalization error has also been validated empirically in deep learning~\citep{keskarLarge, sensitivityGeneralization, identifyingGenProperties}. Here, flatness is measured by the Hessian of the empirical loss evaluated at the model at hand.
Indeed, in their recent extensive empirical study of generalization measures, \citet{jiang2020fantastic} found that measures based on flatness have the highest correlation with generalization.

For models trained with stochastic gradient descent (SGD), this could present a (partial) explanation for their generalization performance, since the convergence of SGD can be connected to flat local minima by studying SGD as an approximation of a stochastical differential equation ~\citep{sgdAndFlat,threeFactors}. However, while large and small batch methods appear to converge in different basins of attraction, the basins can be connected by a path of low loss, i.e., they can actually converge into the same basin ~\citep{sagun2017empirical}. Moreover, as~\citet{dinhSharp} remarked, classical flatness measures---which are based only on the Hessian of the loss function---cannot theoretically be related to generalization: For deep neural networks with ReLU activation functions, there are linear reparameterizations that leave the network function unchanged (hence, also the generalization performance), but change any measure derived only from the loss Hessian. 
Novel measures related to flatness have been proposed that are invariant to linear reparameterizations~\cite{tsuzuku,scaleInvariantMeasure,fisherRao}. \citet{scaleInvariantMeasure} measure flatness in the quotient space of a suitable equivalence relation, and~\citet{fisherRao} utilize the Fisher-Rao metric, but the theoretical connection of these two measures to generalization is not well-understood. \citet{exploringGen} noted that the reparamterization-issue can in general be resolved by balancing a measure of flatness with a norm on the parameters, which is the way that normalized flatness~\citep{tsuzuku}, Fisher-Rao metric~\citep{fisherRao} and our proposed relative flatness become reparameterization-invariant. However, the solution proposed in~\citet{exploringGen} necessitates data-dependent priors~\citep{dziugaite2018data}
or related approaches, which "adds non-trivial costs to
the generalization bounds"~\citep{tsuzuku}.

The question arises in which way the loss Hessian and parameter norm should be combined. A simple scaling of the full Hessian with the squared parameter norm does not provide a reparamterization-invariant measure. Doing so for each layer independently and summing up the results provides a measure that is only invariant under layer-wise reparameterizations. Similarly, only considering a single feature layer yields a measure that is layer-wise reparameterization invariant~\citep{petzka2019reparameterization}.
While the resulting measure can also be analyzed within our framework to obtain a bound on feature robustness, our proposed measure yields a tighter bound and is also invariant under neuron-wise reparameterizations.


\citet{tsuzuku} derive a flatness measure that scales an approximation to the loss Hessian by a parameter-dependent term. 
Their proposed measure correlates well with generalization and is theoretically connected to it via the PAC-Bayesian framework. However, this connection requires the assumption of Gaussian priors and posteriors and is not informative with respect to conditions under which this connection holds. Moreover the measure is impractical, since computing it requires solving an optimization problem for every layer that can be numerically unstable. (\citet{tsuzuku} propose a solution to the numerical instability at the cost of losing the reparameterization-invariance.) 
Instead, relative flatness can be computed directly and takes only parameters of a specific layer into account---although combining relative flatness of all layers by simple summation is possible.


A series of recent papers studies flatness by minimizing the loss at local perturbations of the parameters considering $\min_\mathbf{a} \Ecal_{emp}(f(\w+\mathbf{a}),S)$ \citep{foret2021sharpnessaware, zheng2020regularizing, sun2020exploring,adversarial_weightPerturbation}. Regularization techniques enforcing these notions of flatness during training in classification tasks lead to better generalization. These empirical results follow earlier works by \citet{entropySGD} and \citet{izmailov2018averaging} that similarly obtained better generalization by enforcing flatter minima. Their observations are well-explained by our theory: Low error at perturbations $\Ecal_{emp}(f(\w+\mathbf{a}),S)$ lead to good generalization around training samples. This requires that the underlying distribution has (approximately) locally constant labels (using key equation \eqref{eq:keyEquation}), which is reasonable for the image classification tasks they consider.

\citet{robustnessGeneralization} propose a notion of robustness over a partion of the input space and derive generalization bounds based on it. However, their notion requires the choice of a partitioning of the input space before seeing any samples. Thus, robustness over the partition can be hard to estimate for a model that depends on a sample set $S$. Our notion of feature robustness is measured around a given sample set and thus does not require a uniform data-independent partitioning. 
Such a sample-dependent notion of robustness is necessary to connect it to the flatness of the loss surface, since flatness is a local property around training points. 

\citet{sensitivityGeneralization} find that robustness to input perturbation as measured by the input-output Jacobian correlates well with generalization on classification tasks. This is in line with our findings applied to $\phi=id_\Xcal$ chosen as the identity (for neural networks this means considering the input layer as features): it follows from Equation~\ref{eq:keyEquation} that robustness to input perturbations directly relates to flatness. Therefore, these findings give additional empirical evidence to the correlation between flatness and generalization. \citet{yao2018hessian} study the Hessian with respect to the input $x\in\Xcal$ and also find that robust learning tends to converge to minima where the input-ouptut Hessian has small eigenvalues.

\newpage

\section{The effect of local label changes}
\label{appdx:locallabelChanges}
For classification tasks with one-hot vectors as labels, the assumption of locally constant labels, i.e., locally constant target output function $y(x)$, seems reasonable since we would not expect the class label to change under (infinitesimally) small changes. One could nonetheless consider a smooth output function with values encoding class probabilities for classification, which may change locally around the training points. For regression tasks, the assumption of locally constant output function is rather unrealistic or at the very least restrictive.

Taking the term defining feature robustness \eqref{eq:defRobustness} as a starting point, we investigate its connection to flatness when the output function $y(x)$ is a smooth function. In the usual setting of machine learning, this information is unknown. We will show that label changes can contribute stronger to the loss in neighborhoods around training samples than (relative) flatness.

To investigate the label dependence, we use the same trick as in \eqref{eq:InputToParameters} to transfer perturbations in the input $x$ to perturbations in parameter space $\w$. To simplify the analysis, we apply feature robustness to the input space (i.e., we only consider $\phi=id_\Xcal$ here). Let $f(x,\w)=\psi(\w x)$ be a model composed of a matrix multiplication of $x$ with $\w$ and a differentiable predictor function $\psi$.
\begin{eqnarray*}
\Ecal_{\tilde{F}}(f,S,A)&= \frac 1n \sum_{i=1}^n \left ( \loss(f(x_i+\delta Ax_i,\w),y[x_i+\delta Ax_i]) - \loss(f(x_i,\w),y_i)\right )\\
&= \frac 1n \sum_{i=1}^n \left ( \loss(f(x_i, \w+\delta \w A),y[x_i+\delta Ax_i]) - \loss(f(x_i,\w),y_i)\right )
\end{eqnarray*}

Defining a function \begin{equation} \gamma_i(\delta) =  \loss(f(x_i, \w+\delta \w A),y[x_i+\delta Ax_i]), \end{equation} we have that $\Ecal_{\tilde{F}}(f,S,A)= \frac 1n \sum_{i=1}^n \gamma_i(\delta)$. For each $\gamma_i$ we use Taylor approximation in $\delta$. In the following, we write $\loss_\w(x_i,\w^*,y_i)$ for the first derivative of the loss with changes in $\w$ at $x_i,y_i=y[x_i]$ and $\w^*$, and we write  $\loss_{y}(x_i,\w^*,y_i)$ for the first derivative of the loss with changes of the output $y$ at $x_i,y_i=y[x_i]$ and $\w^*$. Similarly, we consider second derivatives $\loss_{\w\w},\loss_{yy}$ and $\loss_{\w y}$. Finally, we denote the derivative of $y(x)$ with respect to $x$ by $y_x$ and the second derivative by $y_{xx}$. Then,
\begin{equation}
\gamma_i'(0)=\loss_\w(x_i,\w^*,y_i)\cdot(\w^*A) + \loss_y(x_i,\w^*,y_i)\cdot (y_x(x_i)\cdot Ax_i)
\end{equation}
and 
\begin{eqnarray}\label{eq:nonconstantLabels}
\gamma_i''(0)=(\w^*A)^T \loss_{\w\w}(x_i,\w^*,y_i)(\w^*A) + (y_x(x_i)\cdot Ax_i)^T \loss_{yy}(x_i,\w^*,y_i)(y_x(x_i)\cdot Ax_i)\\ 
+ \sum_{\text{labels }c} \loss_{y_c}(x_i,\w^*,y_i)\cdot (Ax_i)^T (y_c)_{xx}(x_i)\cdot (Ax_i) +2 (y_x(x_i)\cdot Ax_i)^T \loss_{y\w}(x_i,\w^*,y_i)(\w^*A) 
\end{eqnarray}
At a critical point we have that $\sum_i \loss_\w(x_i,\w^*,y_i)=0$, but since we do not know how the target output function $y(x)$ changes locally, we do not necessarily\footnote{This depends on the loss function in use.} enforce that $\sum_i \loss_y(x_i,w^*,y_i)=0$ at a local optimum. In that case, $\Ecal_{\tilde{F}}(f,S,A)=\sum_i \loss_y(x_i,w^*,y_i)\delta +\mathcal{O}(\delta^2)$ has a non-zero term of first order in $\delta$ and flatness only contributes as a term of order two.  Similarly, other terms in \eqref{eq:nonconstantLabels} can be nonzero, further reducing the influence of relative flatness to a bound on feature robustness. 

As an interesting special case, we note that 
for one-hot encoded labels in classification and letting the output function $y(x)$ describe a parameter vector of a conditional label-distribution given $x$, we have $y_x(x_i)=0$ (recall that we suppose $y(x_i)=y_i$) as each vector component is either $1$ or $0$ and must be a local extreme point ($y(x)$ cannot contain values larger than $1$ or smaller than $0$ by assumption),

We leave a detailed investigation of the consequences of label changes as future work, but identify the implicit assumption of locally constant labels in loss Hessian-based flatness measures as a possible limitation: Flatness can only be descriptive if optimal label changes are approximately locally constant. The fact that a strong correlation between flatness and generalization gap has been often observed points to the fact that distributions in practice satisfy this implicit assumption.

\newpage
\section{Details on the Empirical Validation}
\label{appdx:experiments_details}
Here we provide additional details on the empirical evaluation. Jupyter notebooks containing the experiments are available at~\url{\experimentURL}, ensuring reproducibility, together with an implementation of the relative flatness measure in pytorch~\citep{NEURIPS2019_9015}. 

\subsection{Synthetic Experiments} The experiments on \emph{locally constant labels} and \emph{approximating representativeness} use a synthetic sample in feature space. The schema for both experiments is to
\begin{enumerate}
    \item create a synthetic dataset in feature space $S^\phi$ and test set $T^\phi$,
    \item create a model $f=\phi\circ\psi$,
    \item derive input data as $S=\left(\phi^{-1}\left(S^\phi_x\right),S^\phi_y\right)$, $T=\left(\phi^{-1}\left(T^\phi_x\right),T^\phi_y\right)$
    \item compute relative flatness (or other measures) of $f$ on $S$,
    \item and estimate its generalization gap by computing the empirical risk of $f$ on $S$, and computing the test error on the test test $T$ to estimate the risk.
\end{enumerate}

1) To create $S^\phi$ with a given class separation $c$, we randomly sample 4 cluster centroids $\theta$ from a hypercube in $\RR^6$ and scale them so that their distance is $c$. We then sample a random covariance matrix $\Sigma$ for each cluster and sample points from a Gaussian $\mathcal{N}(\theta,\Sigma)$. Furthermore, we create two redundant features that are a random linear combination of the $6$ informative features. We obtain labels by assigning two clusters to class $1$ and the other two to class $-1$. 

2) We create the model $f$ by first training a linear model $\psi$ on $S^\phi$ using ridge regression from scikit-learn~\citep{scikit-learn}. We then sample a random 4-layer MLP (with architecture 784-512-128-16-8, tanh activation, and Glorot initialization~\citep{glorot2010understanding}) that we use as feature extractor $\phi$. With this, we obtain the $5$-layer MLP $f=\phi\circ\psi$ by adding an $8-2$ layer with weights obtained from $\psi$. 

3) We obtain input data $S$ by reverse propagation of samples in feature space $S^\phi_x$ through the $4$-layer MLP $\phi$. This is an approximation to the inverse feature extractor $\phi^{-1}$. For the output of each layer $z$, we first compute $z'=tanh^{-1}(z)$, i.e., the inverse of the activation function. We then solve $Wz + b = x$, where $W,b$ are the weights and bias of that layer, and $x$ is the corresponding input we want to compute. This yields $S_x=\phi^{-1}(S^\phi_x)$. Note that this reverse propagation of samples introduces a small error. To keep experiments realistic, we discard $S^\phi$ after this step and use only the input dataset $S$ and model $f$ in our computations. 

4) We compute relative flatness as in Def.~\ref{def:flatnessMeasure} (an implementation in pytorch is available on github, see above). 

5) We compute the empirical risk of $f$ on $S$ and estimate the risk on a test set. For the experiments on locally constant labels, generate $5000$ samples, use a training set of size $500$, a test set of size $4500$ (to ensure an accurate estimate of the risk), and repreat the experiment $100$ times for each class separation $c$. For the experiment on approximating representativeness, we use a sample of size $600$ and perform $3$-fold cross-validation. 

\paragraph{Locally constant labels:} For classification, labels are locally constant if in a neighborhood around each point the label does not change. They are approximately locally constant, if this holds for most points. By increasing the distance between the means of the Gaussians, we decrease the likelihood of a point within a neighborhood having a different label. For a finite sample, this means that the likelihood of observing two points close by with different labels decreases. Thus, by increasing the class separation parameter, we increase the degree of locally constant labels.

\paragraph{Approximating representativeness:} A finite random sample as described in 1) has a higher chance of being representative when the means of the Gaussians have a high distance, because each individual Gaussian can be interpolated easily. Of course, the actual representativeness of a sample at hand can vary. Note that this is a very simple form of generating datasets with varying "difficulty". It will be interesting to further explore the impact of the choice of data distribution on (an approximation to) representativeness.

Experiments on the synthetic datasets are run on a laptop with Intel Core i7 and NVIDIA GeForce GTX 965 M 2 GB GPU. The code of the experiments is provided as a jupyter notebook so that they can be easily reproduced.

\subsection{Relative Flatness Correlates with Generalization} 

\begin{figure}
\centering
\begin{minipage}{0.48\textwidth}
  \centering
  \includegraphics[width=\linewidth]{images/experiments/flatness/comparison.png}
  \caption{Generalization gap and various flatness measures for $110$ local minima as presented in Fig.~\ref{fig:relativeFlatness}. The generalization gap correlates stronger with relative flatness than standard flatness, Fisher-Rao norm, a PAC-Bayes based measure and the weights norm.}
  \label{fig:relativeFlatnessAppdx}
\end{minipage}%
\hspace{1em}
\begin{minipage}{0.48\textwidth}
  \centering
  \includegraphics[width=\linewidth]{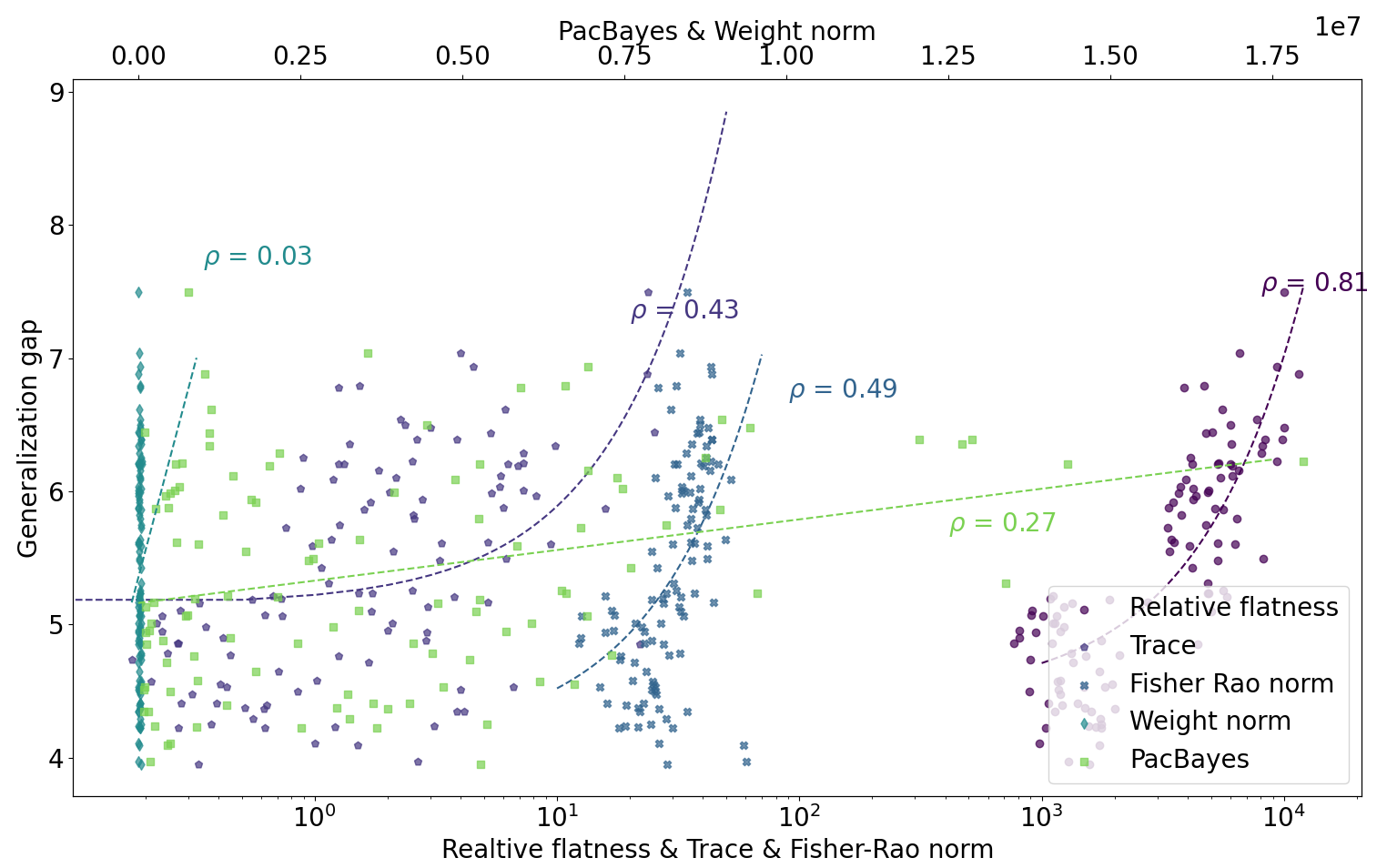}
  \caption{Modifying the local minima in the plot Fig.~\ref{fig:relativeFlatnessAppdx} by reparameterization shows that the proposed relative flatness and the Fisher-Rao norm are invariant to them. It furthermore shows a strong decline in correlation for all other measures.}
  \label{fig:relativeFlatnessAndReparam}
\end{minipage}
\end{figure}

In this experiment, we validate that the proposed relative flatness correlates strongly with generalization in practice. For that, we measure relative flatness (as well as classical flatness measured by the trace of the loss Hessian, the Fisher-Rao norm, a PAC-Bayes based measure~\footnote{The implementation of PAC-Bayes based flatness measure is taken from \url{https://github.com/nitarshan/robust-generalization-measures/blob/master/data/generation/measures.py}}, and the weight norm) together with the generalization gap for various local minima. 

\begin{wrapfigure}{r}{9cm}
        \centering
        \includegraphics[width=9cm]{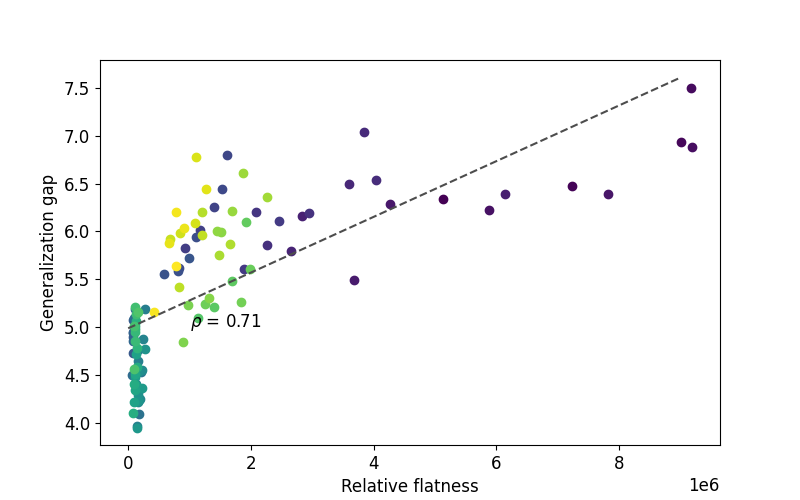}
        \caption{The generalization gap for various local minima correlates with relative flatness measured on the layer different from penultimate layer.}
        \label{fig:relativeFlatness_next_layer}
\end{wrapfigure}

To obtain model parameters at various local minima, we train networks (LeNet5~\citep{lecun1990handwritten}) on the CIFAR10 dataset until convergence (measured in terms of achieving a loss of less than $0.1$ during an epoch, which has been used as a criteria for convergence in similar experiments~\citep{jiang2020fantastic}) with varying hyperparameters. In accordance to works studying the impact of hyperparameters on generalization~\citep{scaleInvariantMeasure, keskarLarge, sensitivityGeneralization, identifyingGenProperties, jiang2020fantastic}, we vary learning rate, mini-batch size, initialization, and optimizer. We vary the mini batch size in ${64, 128, 256, 512, 1024}$, and the learning rate in ${0.0001, 0.02, 0.05}$, running $10$ randomly initialized training rounds for each setup. We use SGD, ADAM, and RMSProp as optimizers. We only use combinations that lead to convergence. The experiments were conducted on a cluster node with 4 NVIDIA GPU GM200 (GeForce GTX TITAN X). As discussed in Sec.~\ref{sec:experiments}, relative flatness has the highest correlation with generalization from all measures we analyzed.

To study the effect of reparameterization, we apply layer-wise reparameterizations on the trained network using random factors in the interval $[5,25]$ which yields a set of novel local minima. The results in Fig.~\ref{fig:relativeFlatnessAndReparam} show that both our proposed relative flatness and the Fisher-Rao norm are invariant to these reparameterization. For all other measures, the correlation with generalization declines substantially. The same would hold for neuron-wise reparameterizations, since both relative flatness and the Fisher-Rao norm are also neuron-wise reparameterization invariant. 
Relative flatness and the Fisher-Rao norm are also invariant under neuron-wise reparameterizations, which could be used to further break the correlation for the other measures. For future work it would be interesting to investigate further symmetries in neural networks and the impact of reparameterizations along these symmetries on flatness measures.

In addition to the calculation of the relative flatness using the feature space of the penultimate layer, we also performed calculations for another fully-connected layer in the network. The resulting correlation can be seen in Fig.~\ref{fig:relativeFlatness_next_layer}. It keeps the high correlation value, but due to less optimal feature space we observe smaller number, than in the previous calculation. Nevertheless, it demonstrates that any $\phi$-$\psi$ separation allows to compute relative flatness.

\newpage

\section{Proofs}
\label{appdx:proofs}
\subsection{Proof of Proposition~\ref{prp:lambdaA}}\label{proof:prp2}
\begin{proof}
Let $\mathcal{K}_{h}$ denote probability distribution defined by a rotational-invariant kernel $k_h$ as in \eqref{eq:kernel} with $k_h(0,z)=\frac{1}{h^m}\cdot k \left (\frac{||z||}{h} \right)  \cdot \mathds{1}_{||z||<h}$ and let $\lambda_i(z)=k_{\delta||\phi(x_i)||}(0,z)$. Let $\mathcal{L}$ denote a continuous function on $\RR^m$ and $\mathcal{O}_m$ the set of orthogonal matrices in $\RR^{m\times m}$. We show that there exists a probability measure $\kappa$ on a set $M_\delta$ of matrices of norm smaller than $\delta$, defining a probability distribution $\mathcal{A}_\delta$, and a probability measure $\omega$ on the product space $(0,\delta]\times \mathcal{O}_m$ such that for each $z\in \RR^m\setminus\{0\}$:
\begin{equation}\label{eq:AcalToKdelta}
 \Exp{A\sim \mathcal{A}_\delta}{\mathcal{L}(z+Az)}= \Exp{(r,O)\sim \omega}{\mathcal{L}(z+rOz)} =    \Exp{\zeta\sim \mathcal{K}_{\delta||z||}}{\mathcal{L}(z+\zeta)} 
\end{equation}
 
Applying this result for each $i=1,\ldots,|S|$ to $\mathcal{L}_i(z)=\loss(\psi(\w,z) ,y_i[z])$ at $z=\phi(x_i)$ completes the proof. For all the standard measure-theoretic concepts used in the proof, we refer the reader to \cite{Krantz}.

Fix some $\zeta_0$ in $\RR^m$ with $||\zeta_0||=1$. We consider the Haar measure $\mu$ on the set of orthogonal matrices $\mathcal{O}_m$. By  \citep[Proposition 3.2.1]{Krantz} and the change of variables formula, 
we have for each $r\in(0,\delta]$
\begin{equation*}
    \int_{O\in \mathcal{O}_m}{\mathcal{L}(z+r||z||O\zeta_0)}\ d\mu(O) =\frac{1}{\text{Vol}( S^{m-1})} \int_{\xi\in S^{m-1}} \mathcal{L}(z+r||z||\xi )\ d\xi
\end{equation*}
where $S^{m-1}$ is the $(m-1)$-sphere. We multiply both sides by $\frac{\text{Vol}(S^{m-1})}{\delta^m} k\left(\frac{r}{\delta}\right)  r^{m-1}$, integrate over $r\in (0,\delta]$ to obtain 
\begin{equation*}
\begin{split}
\frac{\text{Vol}(S^{m-1})}{\delta^m} \int_{r=0}^\delta & \int_{O\in \mathcal{O}_m} {\mathcal{L}(z+r||z||O\zeta_0)} k\left(\frac{r}{\delta}\right) r^{m-1} dr  d\mu(O)  \\
&= \frac{1}{\delta^m} \int_{r=0}^\delta\int_{\xi\in S^{m-1}} \mathcal{L}(z+r||z||\xi) k\left(\frac{r}{\delta}\right) r^{m-1}\ dr\ d\xi \\    
&= \frac{1}{\delta^m} \int_{||\zeta||\leq \delta} \mathcal{L}(z+||z||\zeta) k\left (\frac{||\zeta||}{\delta}\right ) \ d\zeta\\  
&= \int_{||\zeta||\leq \delta||z||} \mathcal{L}(z+\zeta) \frac{1}{(\delta||z||)^m}  k\left (\frac{||\zeta||}{\delta ||z||}\right ) \ d\zeta\\
\end{split}
\end{equation*}
Introducing the product measure $\omega := \frac{\text{Vol}(S^{m-1})}{\delta^m}\cdot (k\left(\frac{r}{\delta}\right) r^{m-1} dr \times \mu )$ on $(0,\delta]\times \mathcal{O}_n$, this implies that
\begin{equation}\label{eq:measureEquality}
\Exp{(r,O)\sim \omega}{\mathcal{L}(z+r||z||O\xi_0)} =  \Exp{\zeta\sim \mathcal{K}_{\delta||z||}}{\mathcal{L}(z+\zeta)} 
\end{equation}
The measure $\omega$ can be pushed forward to a measure on matrices of norm $||A||\leq \delta$. For this, consider the homeomorphism 
\[H: (0,\delta]\times \mathcal{O}_n \rightarrow \{ rO\ |\ r\in (0,\delta], O\in \mathcal{O}_n\}=:M_\delta\subseteq \{A\in \RR^{n\times n}\ |\ ||A||\leq \delta\} \]
given by $H(r,O)=rO$. We use the inverse of $H$ to push forward the measure $\omega$  to a measure $\kappa$ on $M_\delta$ and obtain from \eqref{eq:measureEquality} that
\[ \Exp{A\sim (M_\delta,\kappa)}{\mathcal{L}(z+||z||A\zeta_0)} = \Exp{\zeta\sim \mathcal{K}_{\delta||z||}}{\mathcal{L}(z+\zeta)}  \]
Finally, there exists an orthogonal matrix $O$ such that $O||z||\zeta_0=z$. Since $\kappa(A)=\kappa(AO^{-1})$ by definition of $\kappa$ and since $M_\delta O=M_\delta$, we get for any $z$ that 
\begin{equation*}
\begin{split}
 \Exp{\zeta\sim \mathcal{K}_{\delta||z||}}{\mathcal{L}(z+\zeta)} &= \Exp{A\sim (M_\delta,\kappa)}{\mathcal{L}(z+A||z||\zeta_0)} \\
&= \Exp{A\sim (M_\delta O^{-1},\kappa)}{\mathcal{L}(z+AO||z||\zeta_0)} \\
&= \Exp{A\sim (M_\delta,\kappa)}{\mathcal{L}(z+Az)} \\
\end{split}
\end{equation*}
Hence, the probability distribution $\mathcal{A}_\delta$ on  matrices  with norm bounded by $\delta$ defined by the probability measure $\kappa$ with support on $M_\delta$, and the space $(0,\delta]\times \mathcal{O}_m$ equipped with $\omega = \frac{\text{Vol}(S^{m-1})}{\delta^m}\cdot (k\left(\frac{r}{\delta}\right) r^{m-1} dr \times \mu )$ give the desired probability distributions satisfying \eqref{eq:AcalToKdelta}.   
\end{proof}

\newpage
\subsection{Proof of Theorem~\ref{thm:reparameterizations}}\label{proof:thm4}

We rephrase Theorem~\ref{thm:reparameterizations} split into Theorem~\ref{thm:invarianceAll} and a subsequent corollary that specify the reparameterizations under consideration. Let $f=f(\w^1,b^1,\w^2,b^2,\ldots, \w^L,b^L)$ denote a ReLU network function parameterized by parameters $\w^k=w^k_{s,t}$ and bias $b^k=b^k_s$ of the $k$-th layer given by
\begin{equation*}f(x)=\w^L \sigma (\ldots \sigma(\w^l\ \sigma(\w^{l-1}\sigma(\ldots  \sigma(\w^1 x+b^1))\ldots)+b^{l-1})+b^l)\ldots )+b^L.\end{equation*}
Recall that we let $\phi^l(x)$ denote the composition of the first $(l-1)$ layers so that we obtain a decomposition $f(x,\w^l)=g^l(\w^l \phi^l(x))$ of the network. Using \eqref{eq:defFlatness} we obtain a relative flatness measure $\kappa^l_{Tr}(\w)$ for the chosen layer.

A layer-wise reparameterizaton multiplies all weights in a layer $l$ with a positive number $\lambda$ and divides the weights of another layer $l'\neq l$ by the same $\lambda$. Due to the positive homogeneity of the ReLU activation, this does not change the network function. By a neuron-wise reparameterization, we mean the operation that multiplies all weights into a neuron by some positive $\lambda$ and divides all outgoing weights of the same neuron by $\lambda$. Again, the positive homogeneity of the activation function implies that this operation does not change  the network function. A layer-wise reparameterization is simply the parallel application of neuron-wise reparameterization for all neurons of one layer with the same reparameterization parameter $\lambda>0$.

\begin{theorem}\label{thm:invarianceAll}
Let $f=f(\w^1,b^1,\w^2,b^2,\ldots, \w^L,b^L)$ denote a neural network function parameterized by parameters $\w^k=w^k_{s,t}$ and bias $b^k=b^k_s$ of the $k$-th layer. Suppose there are positive numbers $\lambda_{s,t}^k$ such that the parameters $\w_\lambda^k,b^k_\lambda$, obtained from multiplying $w_{s,t}^k$ at matrix position $(s,t)$ in layer $k$ by $\lambda_{s,t}^k$ and $b^k_s$ by $\lambda_{(s,0)}^k$, satisfy that $f(\w^1,b^1,\w^2,b^2,\ldots, \w^L,b^L)= f(\w_\lambda^1,b_\lambda^1,\w_\lambda^2,b_\lambda^2,\ldots, \w_\lambda^{L}, b_\lambda^L)$. If for the layer with index $l$ it holds that $\lambda^l_{(s,t)}=\lambda^l_{(s,t')}$ for each $s,t$ and $t'$, then 
$\kappa^l_{Tr}(\w) =\kappa^l_{Tr}(\w_\lambda)$ 
for the notion of relative flatness from Definition~\ref{def:flatnessMeasure}.
\end{theorem}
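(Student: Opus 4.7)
My plan is to show that the specific bilinear weighting in the definition of $\kappa^l_{Tr}$ is tailored exactly to cancel the multiplicative factors picked up by the layer-$l$ Hessian of the empirical loss under the reparameterization. Let $R$ denote the diagonal linear map on parameter space defined by $R(\w)^k_{s,t}=\lambda^k_{s,t}\,w^k_{s,t}$ (and similarly on biases), so that $\w_\lambda = R(\w)$. The function-level identity $f(\cdot,\w)=f(\cdot,\w_\lambda)$ is the standard positive-homogeneity invariance of ReLU networks, and in fact persists for all weights in an open neighborhood of $\w$ rather than merely at $\w$ itself; hence $\Ecal_{emp}\circ R=\Ecal_{emp}$ as functions on that neighborhood.

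Applying the chain rule once, and then differentiating once more, to this identity, and using that $R$ is diagonal in the chosen coordinates, yields
\begin{equation*}
\frac{\partial \Ecal_{emp}}{\partial w^l_{s,t}}\bigg|_{\w_\lambda} \;=\; \frac{1}{\lambda^l_{s,t}}\,\frac{\partial \Ecal_{emp}}{\partial w^l_{s,t}}\bigg|_{\w}, \qquad \frac{\partial^2 \Ecal_{emp}}{\partial w^l_{s,t}\,\partial w^l_{s',t'}}\bigg|_{\w_\lambda} \;=\; \frac{1}{\lambda^l_{s,t}\,\lambda^l_{s',t'}}\,\frac{\partial^2 \Ecal_{emp}}{\partial w^l_{s,t}\,\partial w^l_{s',t'}}\bigg|_{\w}.
\end{equation*}
Under the hypothesis $\lambda^l_{s,t}=\lambda_s$ (independent of $t$), the second-derivative factor simplifies to $1/(\lambda_s\lambda_{s'})$.

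To finish, I would substitute this transformation rule into the definition of $\kappa^l_{Tr}$ from Definition~\ref{def:flatnessMeasure}, which weights each layer-$l$ Hessian entry $\partial^2 \Ecal_{emp}/(\partial w^l_{s,t}\,\partial w^l_{s',t'})$ by the product $w^l_{s,t}\,w^l_{s',t'}$ of the corresponding weights (with the remaining indices contracted via a trace). Because $(w^l_\lambda)_{s,t}(w^l_\lambda)_{s',t'}=\lambda_s\lambda_{s'}\,w^l_{s,t}\,w^l_{s',t'}$, the $\lambda_s\lambda_{s'}$ factors from the weight products cancel the $1/(\lambda_s\lambda_{s'})$ factors from the Hessian termwise, giving $\kappa^l_{Tr}(\w_\lambda)=\kappa^l_{Tr}(\w)$. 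The subsequent corollary for layer-wise reparameterizations is the special case $\lambda_s\equiv\lambda$, and the neuron-wise case follows by taking $\lambda_s\neq 1$ for a single index $s$.

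The main obstacle is essentially bookkeeping: verifying that the precise bilinear structure of $\kappa^l_{Tr}$ matches what the two chain-rule applications produce. The hypothesis that $\lambda^l_{s,t}$ is independent of $t$ is what guarantees this match. If $\lambda^l_{s,t}$ depended on $t$ (which would correspond to a neuron-wise rescaling of layer $l{-}1$'s output neurons absorbed column-wise into $\w^l$), then the weight products would pick up mismatched $\lambda^l_{s,t}\lambda^l_{s',t'}$ factors that the diagonal Hessian rescaling could not absorb, and invariance would fail. Thus the $t$-independence hypothesis is not merely convenient but essential, and it delineates exactly the class of (layer-wise and neuron-wise) reparameterizations covered by the corollary.
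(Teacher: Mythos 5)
Your proposal is correct and takes essentially the same route as the paper's proof: differentiate the reparameterization identity (valid as an identity in a parameter neighborhood) through the diagonal rescaling map, obtain that the layer-$l$ Hessian blocks scale as $\tilde H_{s,s'} = H_{s,s'}/(\lambda_s\lambda_{s'})$, and cancel this against the $\lambda_s\lambda_{s'}$ scaling of the row inner products, exactly as the paper does neuron-by-neuron via its maps $\eta_s$. One minor imprecision: $\kappa^l_{Tr}$ is $\sum_{s,s'}\langle \w^l_s,\w^l_{s'}\rangle \, Tr(H_{s,s'}(\w^l,S))$, i.e.\ the weight-product index and the Hessian-trace index are contracted separately rather than as a full quadratic form, but since under your $t$-independence hypothesis the scaling factors depend only on $(s,s')$, the termwise cancellation goes through unchanged.
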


\begin{corollary}
Let $\sigma_i$ denote the variance of the i-th coordinate of $\phi^l(x)$ over 
samples $x\in S$ and $V=diag\left ( {\sigma_1},\ldots, {\sigma_{n_{l-1}}}\right )$. If the relative flatness measure $\kappa^l_{Tr}$ is applied to the representation 
$f=f(\w^1,b^1,\ldots,V^{-1}\w^{l-1},V^{-1}b^{l-1},\w^lV,b^l,\w^{l+1},b^{l+1}\ldots, \w^L,b^L)$, i.e.,
\begin{equation*}f(x)=\w^L \sigma (\ldots \sigma(\w^lV\ \sigma(V^{-1}\w^{l-1}\sigma(\ldots  \sigma(\w^1 x+b^1))\ldots)+V^{-1}b^{l-1})+b^l)\ldots )+b^L,\end{equation*}
then $\kappa^l_{Tr}$ is invariant under all neuron-wise (and layer-wise) reparameterizations 
\end{corollary}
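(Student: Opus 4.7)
The plan is to reduce the claim to Theorem~\ref{thm:invarianceAll} by decomposing any neuron-wise reparameterization of $f$ into single-neuron rescalings, and then handling each rescaling separately. A layer-wise reparameterization is just the special case in which all neurons in one layer share a common factor, so it is enough to verify invariance under a single rescaling by some $\mu>0$ of one neuron $\eta$ in some layer $k$. I will then split into three cases according to where $\eta$ sits relative to the chosen feature layer $l$.

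For $k=l$, the rescaling multiplies row $\eta$ of $\w^l$ (and $b^l_\eta$) by $\mu$ and column $\eta$ of $\w^{l+1}$ by $1/\mu$. Because $V$ multiplies $\w^l$ from the right, $\tilde{\w}^l=\w^l V$ inherits the same uniform row-$\eta$ scaling while all other rows of $\tilde{\w}^l$ are unchanged. The row-wise hypothesis $\lambda^l_{(s,t)}=\lambda^l_{(s,t')}$ of Theorem~\ref{thm:invarianceAll} is therefore met and invariance follows directly. For $k\notin\{l-1,l\}$, the rescaling is an internal reparameterization of either the subnetwork that produces $\phi^l$ (if $k\leq l-2$) or the head $g^l$ (if $k\geq l+1$); by positive homogeneity of the ReLU activations, the composite function of the affected subnetwork is preserved, so $\phi^l$ (and hence $V$ and $\tilde\phi^l$), $\w^l$, and $g^l$ as a function are all unchanged. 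Consequently $\tilde{\w}^l$ and the loss Hessian with respect to it are unaffected, and $\kappa^l_{Tr}$ is trivially invariant.

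The substantive case is $k=l-1$. Here row $\eta$ of $\w^{l-1}$ is scaled by $\mu$ and column $\eta$ of $\w^l$ by $1/\mu$, so the $\eta$-th coordinate of $\phi^l(x)$ is scaled by $\mu$ and the $\eta$-th diagonal entry of the matrix $V$ recomputed on the reparameterized network picks up a compensating factor of $\mu$. A direct calculation then shows that the induced changes cancel in the $V$-normalized representation: column $\eta$ of $\tilde{\w}^l_{new}=\w^l_{new}V^{new}$ equals $(1/\mu)\w^l_{\cdot,\eta}\cdot(\mu V_{\eta\eta})=\tilde{\w}^l_{\cdot,\eta}$, row $\eta$ of $\tilde{\w}^{l-1}_{new}=(V^{new})^{-1}\w^{l-1}_{new}$ equals $V_{\eta\eta}^{-1}\w^{l-1}_{\eta,\cdot}=\tilde{\w}^{l-1}_{\eta,\cdot}$, and $\tilde\phi^l_{new}(x)=\tilde\phi^l(x)$. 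The $V$-normalized representation of the reparameterized network is therefore literally identical to that of the original one, so $\kappa^l_{Tr}$ takes the same value.

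The main obstacle is checking the cancellation in this third case: it hinges on $V$ scaling linearly (i.e., as the standard deviation) with the feature magnitude, which is precisely what makes a single factor $\mu$ appear in $V^{new}_{\eta\eta}$ and exactly match the $1/\mu$ scaling of column $\eta$ of $\w^l$; a heavier or lighter power would leave a residual factor of $\mu$ and break the identification of representations. Composing the three cases over all single-neuron rescalings then yields the claimed invariance of $\kappa^l_{Tr}$ under all neuron-wise and layer-wise reparameterizations.
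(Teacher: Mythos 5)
Your proof is correct and follows essentially the same route as the paper: everything except single-neuron rescalings at layer $l-1$ is reduced to Theorem~\ref{thm:invarianceAll} (the cases $k=l$ and $k\notin\{l-1,l\}$ correspond to the paper's observation that those reparameterizations are already covered by the theorem), and for the remaining $k=l-1$ case you make explicit the paper's argument that the $V$-normalization selects a unique representative, via the cancellation $\w^l_{new}V^{new}=\w^lV$, $(V^{new})^{-1}\w^{l-1}_{new}=V^{-1}\w^{l-1}$. Your side remark that this cancellation requires $V$ to scale linearly with the feature magnitude (i.e., $\sigma_i$ read as the standard deviation, consistent with the notation, rather than literally as the variance) is a valid and worthwhile clarification of the corollary's statement.
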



\begin{proof}
We are given a neural network function $f(x; \w^1,b^1,\ldots, \w^L,b^L)$ parameterized by  parameters $\w^k$ and bias terms $b^k$ of the $k$-th layer and positive numbers $\lambda_{(s,t)}^1,\ldots,\lambda_{(s,t)}^L$ such that the parameters $\w_\lambda^k$ obtained from multiplying weight $\text{w}_{(s,t)}^k$ at matrix position $(s,t)$ in layer $k$ by $\lambda_{(s,t)}^k$ and $b_s^k$ by $\lambda_{(s,0)}^k$ satisfies that \[f(x; \w^1,b^1,\w^2,b^2,\ldots, \w^L,b^L)= f(x; \w_\lambda^1,b_\lambda^1,\w_\lambda^2,b_\lambda^2\ldots, \w_\lambda^L,b_\lambda^L)\] for all $\w^k,b^k$ and all $x$. 

For fixed layer $l$, we denote the $s$-th row of $\w^l$ by $\w^l_s$ before reparameterization, and we denote the $s$-th row of $\w^l_\lambda$ by $\w^l_{\lambda s}$ after reparameterization. For simplicity of the notation, we will collect all bias terms in terms $\mathbf{b},\mathbf{b}_\lambda$ before and after reparameterization respectively.
Let 
\begin{equation*}
	\begin{split}
		F(\uu):=\sum_{i=1}^{|S|} \loss(f(x_i; \w^1,\w^2,&\ldots,[\w^l_1,\ldots,\w^l_{s-1},\uu,\w^l_{s+1},\ldots \w^l_{d}],\ldots,\w^L,\mathbf{b}),y_i)
	\end{split}
\end{equation*}
denote the loss as a function on the parameters of the $s$-th neuron in the $l$-th layer (encoded in the $s$-th row of $\w^l$) before reparameterization and 
\begin{equation*}
	\begin{split}
		\tilde F(\uu):=\sum_{i=1}^{|S|} \loss( f(x_i;  \w^1_{\lambda},\w^2_{\lambda} ,&\ldots,[\w^l_{\lambda 1} ,\ldots,\w^l_{\lambda(s-1)} ,\uu,\w^l_{\lambda(s+1)},\ldots \w^l_{\lambda d}], \ldots, \w^L_{\lambda},\mathbf{b}_\lambda),y_i)
	\end{split}
\end{equation*}
denote the loss as a function on the parameters into the $s$-th neuron in the $l$-th layer (encoded in the $s$-th row of $\w^l$) after reparameterization. 

For the same layer $l$, we define a linear function $\eta_s:\RR^m\rightarrow\RR^m$ by \[\eta_s(\uu)=\eta_s(u_1,u_2,\ldots, u_{m})=(u_1\lambda^l_{(s,1)},u_2\lambda^l_{(s,2)},\ldots, u_{m}\lambda^l_{(s,m)}).\] By assumption, we have that $\tilde F(\eta_s(\w^l_s))=F(\w^l_s)$ for all $\w^l_s$. By the chain rule, we compute for any coordinate $u_t$ of $\uu$,
\[ \frac{\partial F(\uu)}{\partial u_t}\Bigr|_{\uu=\w^l_s} 
= \frac{\partial \tilde F(\eta_s(\uu))}{\partial u_t}\Bigr|_{\uu=\w^l_s}\]
\[= \sum_k \frac{\partial \tilde F(\eta_s(\uu))}{\partial (\eta_s(\uu)_k)}\Bigr|_{\eta_s(\uu)=\eta_s(\w^l_s)} \cdot \frac{\partial (\eta_s(\uu)_k)}{\partial u_t}\Bigr|_{\eta_s(\uu)=\eta_s(\w^l_s)} \]
\[=  \frac{\partial \tilde F(\vv)}{\partial v_t}\Bigr|_{\vv= \w^l_{\lambda s}}\cdot \lambda^l_{(s,t)}. \]
Similarly, for 
\begin{equation*}
	\begin{split}
		G(\uu, \uu'):=\sum_{i=1}^{|S|} \loss(f(x_i; \w^1,\w^2,&\ldots,[\w^l_1,\ldots,\w^l_{s-1},\uu,\w^l_{s+1},\ldots,\w^l_{s'-1},\uu',\w^l_{s'+1},\ldots  \w^l_{d}],\ldots \\ &\ldots,\w^L,,\mathbf{b}),y_i)
	\end{split}
\end{equation*}
denoting the loss as a function on the parameters of the $s$-th and $s'$-th neuron in the $l$-th layer (encoded in the $s$-th and $s'$-th row of $\w^l$) before reparameterization and for
\begin{equation*}
	\begin{split}
		\tilde G(\uu,\uu'):=&\sum_{i=1}^{|S|} \loss( f(x_i;  \w^1_{\lambda},\w^2_{\lambda} ,\ldots \\ &\ldots ,[\w^l_{\lambda 1} ,\ldots,\w^l_{\lambda(s-1)} ,\uu,\w^l_{\lambda(s+1)},\ldots, \w^l_{s'-1},\uu',\w^l_{s'+1},\ldots \w^l_{\lambda d}], \ldots
		\\ &\ldots \w^L_{\lambda},,\mathbf{b}_\lambda),y_i)
	\end{split}
\end{equation*}
we have $\tilde G(\eta_s(\w_s^l),\eta_{s'}(\w_{s'}^l))=G(\w_s^l,\w_{s'}^l)$. For all $s,s',t,t'$ we obtain second derivatives
\[ \frac{\partial^2 G(\uu,\uu')}{\partial u_t \partial u'_{t'}}\Bigr|_{\uu=\w^l_s, \uu'=\w^l_{s'}} = \lambda^l_{(s,t)} \lambda^l_{(s',t')}  \frac{\partial^2 \tilde G(\uu,\uu')}{\partial u_t \partial u'_{t'} }\Bigr|_{\uu= \w^l_{\lambda s},\uu'= \w^l_{\lambda s'}}. \]
Consequently, the Hessian $H(\w^l,S)$ of the empirical risk before reparameterization and the Hessian $\tilde H (\w^l_\lambda,S)$ after reparameterization satisfy at the position corresponding to $w_{s,t}$ and $w_{s',t'}$ that
\[H_{s,s'}(\w^l,S)_{(t,t')} =\lambda^l_{(s,t)} \lambda^l_{(s',t')} \cdot \tilde H_{s,s'}(\w^l_\lambda)_{(t,t')}.\] 

Assuming that $\lambda^l_s:=\lambda^l_{(s,t)} =\lambda^l_{(s,t')}$ for all $s,t$ and $t'$, then we get that
\begin{equation*}
	\begin{split}
		\kappa_{Tr}^l(\w) &= \sum_{s,s'=1}^d \langle \w^l_s,\w^l_{s'} \rangle \cdot Tr(H_{s,s'}(\w^l,S)) \\
			&= \sum_{s,s'=1}^d \langle \frac{\w^l_{\lambda s}}{\lambda^l_s},\frac{\w^l_{\lambda s'}}{\lambda^l_{s'}} \rangle \cdot Tr( \lambda^l_{s} \lambda^l_{s'} \tilde H_{s,s'}(\w_\lambda,S)) \\
		&= \sum_{s,s'=1}^d \langle \w^l_{\lambda s}, \w^l_{\lambda s'} \rangle \cdot Tr( \tilde H_{s,s'}(\w_\lambda,S)) \\
		&= \kappa_{Tr}^l(\w_\lambda)\\
	\end{split}
\end{equation*}
This proves Theorem~\ref{thm:invarianceAll}. 

To show the corollary, we first observe that all layer-wise reparameterizations are covered by the theorem. To see this, we only need to check that the condition $\lambda^l_{(s,t)}=\lambda^l_{(s,t')}$ holds for each $s,t$ and $t'$. For layer-wise reparameterizations, we even have that $\lambda^l_{(s,t)}=\lambda^l$ for all $s,t$, since all weights of one layer are multiplied by the same scalar $\lambda^l$, and $\lambda^l_{(s,t)}=\lambda^l_{(s,t')}$ is easily seen to hold true. 

Note further, that any neuron-wise reparameterization given by multiplying all weights into a neuron in a layer $\iota\neq l-1$ by $\lambda>0$ and dividing all outgoing weights by $\lambda$ is also covered by the theorem. Hence, the only neuron-wise reparameterization that can change the relative flatness measures is the one multiplying some row of $\w^{l-1}$ by some $\lambda>0$ and dividing the corresponding column of $\w^l$ by the same $\lambda$. However, by multiplying both $\w^{l-1}$ and $\w^l$ with $V^{-1}$ and $V$ from the left and right respectively, we perform an explicit neuron-wise reparameterization that chooses a unique representative and therefore removes the dependence on such reparamerizations.
\end{proof}

\newpage

\subsection{Proof of Theorem~\ref{thm:averaging}}\label{proof:thm5}

In this section, we prove 
Theorem~\ref{thm:averaging}. 
For clarity, we repeat the assumptions and the statement we prove in this section:

We consider a model $f(x,\w)=g(\w\phi(x))$, a loss function $\loss$ and a sample set $S$, and let $O_m\subset \RR^{m\times m}$ denote the set of orthogonal matrices. Let $\delta$ be a positive (small) real number and $\w=\omega\in\RR^{d\times m}$ denote parameters at a local minimum of the empirical risk on a sample set $S$. If the output function satisfies that 
$y[\phi_{\delta A}(x_i)]=y[\phi(x_i)]=y_i$
for all $(x_i,y_i)\in S$ and all matrices $||A||\leq 1$, then 
we want to show that  $f(x,\omega)$ is $\left ( (\delta, S, O_m), \epsilon \right )$-feature robust on average over $O_m$  for $\epsilon= \frac{\delta^2}{2m} \kappa^\phi_{Tr}(\omega)+\mathcal{O}(\delta^3)$, i.e.,
\[ \left |\Ecal_{\mathcal{F}}^\phi(f,S,\alpha \Acal)\right | \leq \frac{\delta^2}{2m} \kappa^\phi_{Tr}(\omega)+\mathcal{O}(\delta^3) \textrm{ for all }0\leq\alpha\leq\delta\]

\begin{proof}
Writing $z_i=\phi(x_i)$ and $\Ecal_{emp}(\w,S)=\Ecal_{emp}(f(\w,x),S)$ and using the assumption that $y[\phi_{\delta A}(x_i)]=y_i$ for all $(x_i,y_i)\in S$ and all $||A||\leq 1$, we have for any $0\leq \alpha\leq \delta$,
\begin{equation}\label{eq:calculation1}
\begin{split}
\Ecal_\mathcal{F}^\phi(f,S,\alpha A)+\Ecal_{emp}(\w,S) &= \frac 1{|S|} \sum_{i=1}^{|S|} \loss(\psi[\w, \phi_{\alpha A}(x_i)],\ y[\phi_{\alpha A}(x_i)])\\
&=\frac{1}{|S|} \sum_{i=1}^{|S|} \loss(\psi(\w, z_i+\alpha A z_i),y_i) \\&= \frac{1}{|S|} \sum_{i=1}^{|S|} \loss(\psi(\w + \alpha \w A,z_i),y_i)\\
& = \Ecal_{emp}(\w+\alpha \w A,S)
\end{split}
\end{equation}
The latter is the empirical error $\Ecal_{emp}(\w+\alpha \w A,S)$ of the model $f$ on the sample set $S$ at parameters $\w + \alpha \w A$. If $\delta$ is sufficiently small, then by Taylor expansion around the local minimum $\w=\omega$, we have up to order of $\mathcal{O}(\delta^3)$ that 
\begin{equation}\label{eq:calculation2}
\begin{split}
         \Ecal_{emp}(\omega + \alpha  \omega A,S)  &=  \Ecal_{emp}(\omega,S) +  \frac{\alpha^2}{2}  \sum_{s,t=1}^d (\omega_s A) \cdot H_{s,t}( \omega,\phi(S)) \cdot ( \omega_t A)^T  \\
         &\leq \Ecal_{emp}(\omega,S) +  \frac{\delta^2}{2}  \sum_{s,t=1}^d (\omega_s A) \cdot H_{s,t}( \omega,\phi(S)) \cdot ( \omega_t A)^T \\
         \end{split}
\end{equation}
where $\omega_s$ denotes the $s$-th row of $\omega$.

We consider the set of orthogonal matrices $O_m$ as equipped with the (unique) normalized Haar measure. (For the definition of the Haar measure, see e.g. \cite{Krantz}.) We need to show that $\Exp{A\sim O_m}{\Ecal^\phi_{\mathcal{F}}(f,S, \alpha A)}\leq  \frac{\delta^2}{2m}  \sum_{s,t} \langle \w_s,\w_t \rangle  \cdot Tr(H_{s,t})$ for all $0\leq \alpha\leq \delta$ 
with $\Ecal^\phi_{\mathcal{F}}(f,S, A)$ defined as in Eq.~\ref{eq:defRobustness}. Using~\eqref{eq:calculation1} and~\eqref{eq:calculation2} we get 
	\begin{equation*}
		\Exp{A\sim O_m}{\Ecal_\mathcal{F}^\phi(f,S,\alpha A)}\leq \Exp{A\sim O_m}{\frac{\delta^2}{2}  \sum_{s,t=1}^d (\omega_s A)  H_{s,t}(\omega ,S) ( \omega_t A)^T } +\mathcal{O}(\delta^3)
		\end{equation*}

	
	Using the unnormalized trace $Tr([m_{s,t}])=\sum_s m_{s,s}$ we compute with the help of the so-called Hutchinson's trick: 
 
\begin{equation*}
\begin{split}
Tr(\mathbb{E}_{A\sim O_m}\left [ (\omega_t A)^T (\omega_s A) \right ])&=\mathbb{E}_{A\sim O_m}\left [ Tr((\omega_t A)^T (\omega_s A) \right )] \\
&= \mathbb{E}_{A\sim O_m}\left [ Tr((\omega_s A) (\omega_t A)^T \right )]\\
&= \mathbb{E}_{A\sim O_m}\left [ Tr(\omega_s  \omega_t^T \right )]\\
&=  \langle \omega_s, \omega_t\rangle\\
\end{split}
\end{equation*}
We can interchange two vector coordinates by multiplication of a suitable orthogonal matrix $B$. Since the Haar measure is invariant under multiplication of an orthogonal matrix, the diagonal of $\mathbb{E}_{A\sim O_m}\left [ (\omega_t A)^T (\omega_s A) \right ])$ must contain a constant value. This value along the diagonal must then equal $\frac 1m \langle \omega_s, \omega_t\rangle$. Further, we can multiply one vector coordinate by $(-1)$ via multiplication by an orthogonal matrix, and hence the off-diagonal entries of  $\mathbb{E}_{A\sim O_m}\left [ (\omega_t A)^T (\omega_s A) \right ])$ must be zero, giving that 
\[\mathbb{E}_{A\sim O_m}\left [ (\omega_t A)^T (\omega_s A) \right ])=\frac{\langle \omega_s,\omega_t\rangle }{m}\cdot I.\]
Therefore 
\begin{equation*}
\begin{split}
\mathbb{E}_{A\sim O_m}\left [ (\omega_s A) H_{s,t} (\omega_t A)^T \right ] &= Tr \left ( \mathbb{E}_{A\sim O_m}\left [ (\omega_s A) H_{s,t} (\omega_t A)^T \right ]\right ) \\
&= \mathbb{E}_{A\sim O_m}\left [ Tr ((\omega_s A) H_{s,t} (\omega_t A)^T ) \right ] \\
&= \mathbb{E}_{A\sim O_m}\left [ Tr (H_{s,t} (\omega_t A)^T (\omega_s A)) \right ] \\
&=  Tr (H_{s,t}\cdot \mathbb{E}_{A\sim O_m}\left [ (\omega_t A)^T (\omega_s A)) \right ] \\
&= Tr (H_{s,t} \cdot \frac{\langle \omega_s,\omega_t\rangle }{m}\cdot I)\\
& = \frac{\langle \omega_s,\omega_t\rangle }{m} Tr(H_{s,t}) \\
\end{split}
\end{equation*}
Putting things together, we have for the local optimum $\w=\omega$ that
\begin{equation*}
	\begin{split}
		\Exp{A\sim O_m}{\Ecal_\mathcal{F}^\phi(f, S,\alpha A)} & \leq \frac{\delta^2}{2} \sum_{s,t=1}^d \Exp{A\sim O_m}{ (\omega_s A) H_{s,t}(\omega_t A)^T} + \mathcal{O}( \delta^3 ) \\
		&= \frac{\delta^2}{2m} \sum_{s,t=1}^d \langle \omega_s,\omega_t \rangle \cdot Tr(H_{s,t})+ \mathcal{O}( \delta^3 ) \\
		& = \frac{\delta^2}{2m} \kappa_{Tr}^\phi(\omega) + \mathcal{O}( \delta^3 )\\
		\end{split}
\end{equation*}
\end{proof}

We can further generalize Theorem~\ref{thm:averaging} to more complex labels by introducing  a notion of approximately locally constant labels. 
The following definition frees us from the strong assumption of locally constant labels, i.e. $y[\phi_{\delta A}(x_i)]=y[\phi(x_i)]=y_i$
for all $(x_i,y_i)\in S$ and all matrices $||A||\leq 1$, while still restricting label changes to be one order smaller than the contribution of flatness.

\begin{definition}\label{def:approximateLocallyConstantLabels}
Let $\Dcal$ be a data distribution on a labeled sample space $\Xcal\times\Ycal$ and $S$ a finite iid sample of $\Dcal$. Let $f=\psi\circ \phi$ be a model composed into a feature extractor $\phi$ and predictor $\psi$. We say that $\Dcal$ has approximately locally constant labels of order three around the points $(x,y)\in S$ in feature space $\phi$, if there is some constant $C$ such that
\[\frac{1}{|S|} \sum_{i=1}^{|S|} \Big |\loss(\psi( \phi(x_i)+\Delta_i),y[\phi(x_i)+\Delta_i])- \loss(\psi( \phi(x_i)+\Delta_i),y_i)\Big |\leq C\delta^3 \text{ for }||\Delta_i||\leq \delta||\phi(x_i)||\]
\end{definition}

\begin{corollary}\label{cor:averaginAppendix}
Consider a model $f(x,\w)=\psi(\w,\phi(x))= g(\w\phi(x))$ as above, a loss function $\loss$ and a sample set $S$, and let $O_m\subset \RR^{m\times m}$ denote the set of orthogonal matrices. Let $\delta$ be a positive (small) real number and $\w=\omega\in\RR^{d\times m}$ denote parameters at a local minimum of the empirical risk on a sample set $S$. If $\Dcal$ has approximately locally constant labels of order three around the points $(x,y)\in S$ in feature space, then $f(x,\omega)$ is $\left ( (\delta, S, O_m), \epsilon \right )$-feature robust on average over $O_m$  for $\epsilon= \frac{\delta^2}{2m} \kappa^\phi_{Tr}(\omega)+\mathcal{O}(\delta^3)$. 
\end{corollary}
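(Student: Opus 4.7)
The plan is to reduce Corollary~\ref{cor:averaginAppendix} to Theorem~\ref{thm:averaging} by inserting a single interpolation step that swaps $y[\phi_{\alpha A}(x_i)]$ back to $y_i$ and absorbs the resulting error into the $\mathcal{O}(\delta^3)$ remainder that is already present in the original proof. Since Definition~\ref{def:approximateLocallyConstantLabels} controls exactly this swap whenever $||\Delta_i||\leq \delta||\phi(x_i)||$, and orthogonal matrices preserve norms, no new machinery is needed: the rest of the argument (input-to-parameter trick, Taylor expansion at the local minimum, and Hutchinson's identity under the Haar measure) runs verbatim.

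First I would write, as in Eq.~\eqref{eq:calculation1},
\begin{equation*}
\Ecal_\mathcal{F}^\phi(f,S,\alpha A) + \Ecal_{emp}(\omega, S) = \frac{1}{|S|} \sum_{i=1}^{|S|} \loss\bigl(\psi(\omega,z_i+\alpha A z_i),\, y[\phi(x_i)+\alpha A \phi(x_i)]\bigr),
\end{equation*}
with $z_i=\phi(x_i)$, and then add and subtract the same loss evaluated at the label $y_i$ in place of $y[\phi_{\alpha A}(x_i)]$. Call the difference $R(\alpha,A)$. For orthogonal $A$ with $\|A\|=1$ and $0\le \alpha\le\delta$, one has $\|\alpha A z_i\|=\alpha\|z_i\|\le\delta\|\phi(x_i)\|$, so Definition~\ref{def:approximateLocallyConstantLabels} applies with $\Delta_i=\alpha A z_i$ and yields $|R(\alpha,A)|\le C\delta^3$ uniformly in $A$.

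Next I would handle the swapped sum $\frac{1}{|S|}\sum_i \loss(\psi(\omega, z_i+\alpha A z_i), y_i)$ exactly as in the proof of Theorem~\ref{thm:averaging}: pull the perturbation into parameter space via $\psi(\omega, z_i+\alpha A z_i)=\psi(\omega+\alpha\omega A, z_i)$ so the sum becomes $\Ecal_{emp}(\omega+\alpha\omega A, S)$, Taylor-expand around the local minimum $\omega$ (the first-order term vanishes), and obtain the quadratic form $\frac{\alpha^2}{2}\sum_{s,t}(\omega_s A)\, H_{s,t}(\omega,\phi(S))\,(\omega_t A)^T$ up to $\mathcal{O}(\delta^3)$. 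Taking the expectation over $O_m$ with the normalized Haar measure and invoking Hutchinson's identity exactly as in the original proof gives $\mathbb{E}_{A\sim O_m}[(\omega_t A)^T(\omega_s A)] = \frac{\langle \omega_s,\omega_t\rangle}{m}I$, hence a contribution of $\frac{\delta^2}{2m}\kappa^\phi_{Tr}(\omega)$.

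Finally I would combine: the label-swap error $R(\alpha,A)$ is $\mathcal{O}(\delta^3)$ uniformly in $A$, so its expectation over $O_m$ is still $\mathcal{O}(\delta^3)$ and merges with the Taylor remainder to give exactly the claimed bound $\epsilon=\frac{\delta^2}{2m}\kappa^\phi_{Tr}(\omega)+\mathcal{O}(\delta^3)$. I do not see a genuine obstacle here; the only point that requires a bit of care is checking that the label-swap bound from Definition~\ref{def:approximateLocallyConstantLabels} really does apply uniformly to every orthogonal $A$ and every $\alpha\in[0,\delta]$ simultaneously, which follows from the norm-preserving property of $O_m$. Everything downstream of the swap is a line-for-line reuse of the proof of Theorem~\ref{thm:averaging}.
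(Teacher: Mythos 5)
Your proposal is correct and follows essentially the same route as the paper: the paper's own proof likewise modifies Eq.~\eqref{eq:calculation1} by swapping $y[\phi_{\alpha A}(x_i)]$ for $y_i$ at the cost of an $\mathcal{O}(\delta^3)$ term justified by Definition~\ref{def:approximateLocallyConstantLabels}, and then reuses the proof of Theorem~\ref{thm:averaging} unchanged. Your explicit check that orthogonality of $A$ gives $\|\alpha A z_i\|\leq\delta\|\phi(x_i)\|$, so the definition applies uniformly in $A$ and $\alpha$, is a detail the paper leaves implicit but is exactly the right justification.
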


\begin{proof}
As before, we abbreviate $\phi(x_i)$ by $z_i$. We only need to modify \eqref{eq:calculation1} to account for the strictly weaker assumption on the labels.  For this, we perform Taylor approximation with respect to the labels at $y[\phi(x_i)]=y_i$ to obtain
\begin{equation*}
\begin{split}
\Ecal_\mathcal{F}^\phi(f,S,\alpha A)+\Ecal_{emp}(\w,S) &= \frac 1{|S|} \sum_{i=1}^{|S|} \loss(\psi[ \phi_{\alpha A}(x_i)],\ y[\phi_{\alpha A}(x_i)])\\
&\stackrel{Def~\ref{def:approximateLocallyConstantLabels}}{\leq}\frac{1}{|S|} \sum_{i=1}^{|S|} \loss(\psi(\w, z_i+\alpha A z_i),y_i) +\mathcal{O}(\delta^3)\\
&= \frac{1}{|S|} \sum_{i=1}^{|S|} \loss(\psi(\w + \alpha \w A,z_i),y_i)+\mathcal{O}(\delta^3) \\
&=\Ecal_{emp}(\w+\alpha \w A,S) +\mathcal{O}(\delta^3) \\
\end{split}
\end{equation*}
The rest of the proof follows the arguments used to show Theorem~\ref{thm:averaging}.

\end{proof}

\newpage
\subsection{Proof of Theorem~\ref{thm:FlatnessGenBound}}
\label{proof:thm6}

To prove Theorem~\ref{thm:FlatnessGenBound}, we will require a proposition that bounds $\epsilon$-representativeness for $\lambda_i$ the local densities from Proposition~\ref{prp:lambdaA}. This is achieved in Proposition~\ref{prop:KDEconstantLabels} below uniformly over all distributions $\Dcal$ that satisfy mild regularity assumptions necessary for a well-defined kernel density estimation.  We first compose the proof to Theorem~\ref{thm:FlatnessGenBound} and subsequently show the arguments leading to the required proposition. 

The main idea to prove Theorem~\ref{thm:FlatnessGenBound} is that the family of distributions considered in Proposition~\ref{prp:lambdaA} (a) provides an explicit link between $\epsilon$-representativeness and feature robustness (Proposition~\ref{prp:lambdaA} and Equation~\ref{eq:splitError}), (b) allows us to approximately bound feature robustness by relative flatness (Theorem~\ref{thm:averaging}), and (c) allows us to apply a kernel density estimation to uniformly bound $\epsilon$-representativeness (Proposition~\ref{prop:KDEconstantLabels}).

Theorem~\ref{thm:FlatnessGenBound} is the informal counterpart to the following version. 

\begin{theorem}\label{thm:FlatnessGenBound_precise}
Consider a model $f(x,\w)=g(\w\phi(x))$, a loss function $\loss$, a sample set $S$, and let $m$ denote the dimension of the feature space defined by $\phi$ and let $\delta$ be a positive (small) real number. Let $\omega\in\RR^{d\times m}$ denote a local minimum of the empirical risk on an iid sample set $S$. 

Suppose that the distribution $\Dcal$ has a smooth density $p^\phi_\Dcal$ on the feature space $\RR^m$ such that $\int_z  \left | \nabla^2 \left( p^\phi_\Dcal(z)||z||^2\right ) \right | dz $ and $\int_z \frac{p^\phi_\Dcal(z)}{||z||^m}\ dz$ are well-defined and finite. Then for sufficiently large sample size $|S|$, if the distribution has approximately locally constant labels of order three (see Definition~\ref{def:approximateLocallyConstantLabels}), then it holds with probability $1-\Delta$ over sample sets $S$ that
\[\Ecal_{gen}(f(\cdot, \omega),S)\lesssim |S|^{-\frac{2}{4+m}} \left ( \frac{\kappa^\phi_{Tr}(\omega)}{2m}  +  C_1(p^\phi_\Dcal,L)+\frac{C_2(p^\phi_\Dcal,L)}{\sqrt{\Delta}} \right ) \] up to higher orders in $|S|^{-1}$ for constants $C_1,C_2$ that depend only on the distribution in feature space $p^\phi_\Dcal$ induced by $\phi$ and the chosen $|S|$-tuple $\Lambda_{\delta}$ as in Proposition~\ref{prp:lambdaA} and the maximal loss $L$.
\end{theorem}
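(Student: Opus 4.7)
The plan is to combine the three ingredients highlighted in the paragraph preceding the theorem. First, I would use the split of the generalization error referenced in Equation~\eqref{eq:splitError}: for any family $\Lambda_\delta=(\lambda_i)_{i=1}^{|S|}$ of local probing densities one writes
\[
\Ecal_{gen}(f(\cdot,\omega),S)\ \leq\ \underbrace{\big|\Ecal_{gen}(f(\cdot,\omega),S)-\Ecal_{\mathcal{F}}^\phi(f,S,\Lambda_\delta)\big|}_{\text{representativeness under }\Lambda_\delta}\ +\ \big|\Ecal_{\mathcal{F}}^\phi(f,S,\Lambda_\delta)\big|,
\]
so that it suffices to control each summand separately. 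I would instantiate $\Lambda_\delta$ as the family of rotation-invariant kernels $\lambda_i(z)=k_{\delta\|\phi(x_i)\|}(0,z)$ from Proposition~\ref{prp:lambdaA}. That proposition is exactly what lets me reinterpret integrals against $\Lambda_\delta$ as expectations over a matrix distribution $\Acal_\delta$ supported on $M_\delta$, which is the bridge to the flatness bound.

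For the second summand, the equivalence established in Proposition~\ref{prp:lambdaA} rewrites $\Ecal_{\mathcal{F}}^\phi(f,S,\Lambda_\delta)$ as an expectation of the feature-robustness error over matrices drawn from $\Acal_\delta$, and in particular as an average of the corresponding quantity over the orthogonal group $O_m$ weighted by the radial part. Since the assumption of approximately locally constant labels of order three is in force, I can apply Corollary~\ref{cor:averaginAppendix} (the label-relaxed version of Theorem~\ref{thm:averaging}) to dominate this average by
\[
\tfrac{\delta^2}{2m}\,\kappa_{Tr}^\phi(\omega)+\mathcal{O}(\delta^3).
\]
For the first summand, I invoke Proposition~\ref{prop:KDEconstantLabels}: under the integrability assumptions on $p_\Dcal^\phi$ stated in the theorem, the kernel density estimator based on the same kernels $k_{\delta\|\phi(x_i)\|}$ uniformly approximates $\Dcal$ in the sense that the representativeness error with respect to $\Lambda_\delta$ is, with probability $1-\Delta$, bounded by the standard bias/variance decomposition of a KDE in $m$ dimensions, i.e.~a term of order $\delta^2$ (depending on $\int|\nabla^2(p_\Dcal^\phi(z)\|z\|^2)|\,dz$ and $L$) plus a variance term of order $(|S|\delta^m)^{-1/2}/\sqrt{\Delta}$ (depending on $\int p_\Dcal^\phi(z)/\|z\|^m\,dz$ and $L^2$) by Chebyshev.

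Combining the two yields, with probability at least $1-\Delta$,
\[
\Ecal_{gen}(f(\cdot,\omega),S)\ \lesssim\ \tfrac{\delta^2}{2m}\kappa_{Tr}^\phi(\omega)+C_1(p_\Dcal^\phi,L)\,\delta^2+\frac{C_2(p_\Dcal^\phi,L)}{\sqrt{\Delta\,|S|\,\delta^m}}+\mathcal{O}(\delta^3).
\]
The last step is to optimize over the free scale $\delta$. Balancing the $\delta^2$ terms against the $(|S|\delta^m)^{-1/2}$ variance term by setting $\delta^{4+m}\asymp |S|^{-1}$, i.e.~$\delta\asymp |S|^{-1/(4+m)}$, makes every leading contribution of order $|S|^{-2/(4+m)}$ and pushes the cubic remainder to $|S|^{-3/(4+m)}$, which sits in the ``higher orders'' term. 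This gives exactly the claimed bound, with the constants $C_1,C_2$ inherited from Proposition~\ref{prop:KDEconstantLabels}.

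The main obstacle is not in any single step but in stitching them together cleanly. Concretely, the delicate point is that Theorem~\ref{thm:averaging} bounds the \emph{signed} average of the feature-robustness error over $O_m$, while the decomposition requires the absolute value; one must therefore argue separately for the upper and lower tail (which the proof of Theorem~\ref{thm:averaging} already supports, since the second-order Taylor term dominates the sign after symmetrization). A second subtlety is the condition ``for sufficiently large $|S|$'': this is needed to ensure that the optimal $\delta=|S|^{-1/(4+m)}$ is small enough for both the Taylor expansion in Theorem~\ref{thm:averaging} and the KDE bias bound in Proposition~\ref{prop:KDEconstantLabels} to be valid, and to absorb lower-order remainders into the stated rate.
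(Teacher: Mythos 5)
Your proposal follows essentially the same route as the paper's proof: it splits the generalization gap via Equation~\ref{eq:splitError} with the densities $\Lambda_\delta$ from Proposition~\ref{prp:lambdaA}, bounds representativeness by the KDE bias/variance argument of Proposition~\ref{prop:KDEconstantLabels}, bounds feature robustness via the averaging result (correctly invoking Corollary~\ref{cor:averaginAppendix} for approximately locally constant labels), and balances with $\delta=|S|^{-1/(4+m)}$. This matches the paper's argument, so no changes are needed.
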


\begin{proof}
The proof combines Equation~\ref{eq:splitError} with Proposition~\ref{prop:KDEconstantLabels} and Theorem~\ref{thm:averaging}. At first we use  Equation~\ref{eq:splitError} to split the generalization gap into $\Ecal_{gen}(f) = \Ecal_{Rep}^\phi(f,S,\Lambda_{\mathcal{A}_\delta}) + \Ecal_{\mathcal{F}}(f,S, \Acal_\delta)$. For the family of distributions $\Lambda_\delta$ from Proposition~\ref{prp:lambdaA}, we have by Proposition~\ref{prop:KDEconstantLabels} that 
\[ | \Ecal_{Rep}^\phi(\psi\circ \phi,S,\Lambda^\phi_\delta) | \leq \left (C_1(p^\phi_\Dcal,L)+\frac{C_2(p^\phi_\Dcal,L)}{\sqrt{\Delta}}\right ) \cdot  |S|^{-\frac{2}{4+m}}  + \mathcal{O}(|S|^{-\frac{3}{4+m}})\] 
when $\delta=|S|^{-\frac{1}{4+m}}$.  With $\Ecal_{\mathcal{F}}(f,S,\Acal_\delta) = \Exp{A\sim \mathcal{A}_\delta}{ \Ecal_{\mathcal{F}}(f,S, A)}$, we use (the proof to) Proposition~\ref{prp:lambdaA} to see that this can be written as 
\[\Exp{A\sim \mathcal{A}_\delta}{ \Ecal_{\mathcal{F}} (f,S, A)}= \Exp{0\leq \alpha \leq \delta}{\Exp{A\sim (\mathcal{O}_m,\mu)}{ \Ecal_{\mathcal{F}} (f,S, \alpha A)}}\] 
where $O_m\subset \RR^{m\times m}$ denote the set of orthogonal matrices and $\mu$ the Haar measure on this set. 
Finally, Theorem~\ref{thm:averaging} bounds the latter by $\frac{|S|^{-\frac{2}{4+m}}}{2m}\kappa^\phi_{Tr}(\omega) $ up to higher orders in $|S|^{-1}$.\\
\end{proof}

We finally prove that the bound on $\epsilon$-representativeness in the proof to the preceding Theorem indeed holds true.

\begin{proposition}\label{prop:KDEconstantLabels}
Consider a model $f(x,\w)=\psi(\w,\phi(x))$, a loss function $\loss$ and let $S\subseteq \Xcal\times \Ycal$ be a finite sample set. With $x_i\in S$, let $\lambda_i(z)=k_{\delta||\phi(x_i)||}(0,z)$ define an $|S|$-tuple $\Lambda_\delta$ of densities as in Proposition~\ref{prp:lambdaA} and assume that the loss function is bounded by $L$. Suppose that the distribution $\Dcal$ has a smooth density $p^\phi_\Dcal$ on a feature space $\RR^m$ such that $\int_z  \nabla^2 \left( p^\phi_\Dcal(z)||z||^2\right )  dz $ and $\int_z \frac{p^\phi_\Dcal(z)}{||z||^m}\ dz$ are well-defined and finite.  Then there exist constants $C_1(p^\phi_\Dcal,L),C_2(p^\phi_\Dcal,L)$ depending on the distribution and the maximal loss such that, with probability $1-\Delta$ over possible sample sets $S$, $\epsilon$-interpolation is bounded for $\delta=|S|^{-\frac{1}{4+m}}$ by
\[ | \Ecal_{Rep}^\phi(f,S,\Lambda_\delta) | \leq \left (C_1(p^\phi_\Dcal,L)+\frac{C_2(p^\phi_\Dcal,L)}{\sqrt{\Delta}}\right ) \cdot  |S|^{-\frac{2}{4+m}}  + \mathcal{O}(|S|^{-\frac{3}{4+m}}) \]
\end{proposition}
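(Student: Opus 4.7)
The plan is to view $\Ecal_{Rep}^\phi(f,S,\Lambda_\delta)$ as the bias-plus-noise error of a kernel density estimator with data-dependent bandwidth $h_i = \delta\|\phi(x_i)\|$, applied to the bounded functional $\mathcal{L}(z) := \loss(\psi(\omega,z), y[z])$. Setting $\tilde{\mathcal{L}}(x) := \int \lambda_x(\zeta)\,\mathcal{L}(\phi(x)+\zeta)\,d\zeta$ with $\lambda_x(\zeta) = k_{\delta\|\phi(x)\|}(0,\zeta)$, one can write $\Ecal_{Rep}^\phi(f,S,\Lambda_\delta) = \Exp{z\sim p^\phi_\Dcal}{\mathcal{L}(z)} - \tfrac{1}{|S|}\sum_i \tilde{\mathcal{L}}(x_i)$, and decompose this as the bias $\Exp{z\sim p^\phi_\Dcal}{\mathcal{L}(z)} - \Exp{x\sim\Dcal}{\tilde{\mathcal{L}}(x)}$ plus the statistical fluctuation $\Exp{x\sim\Dcal}{\tilde{\mathcal{L}}(x)} - \tfrac{1}{|S|}\sum_i \tilde{\mathcal{L}}(x_i)$.

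First I would bound the fluctuation term. Since $|\tilde{\mathcal{L}}(x)| \le L$, the variance of a single summand is at most $L^2$, but a sharper bound is obtained by noting that after integration against $p^\phi_\Dcal$ the second moment of $\tilde{\mathcal{L}}$ picks up a factor of the effective kernel support $\sim h_i^{-m}$. Chebyshev's inequality then gives, with probability at least $1-\Delta$, a deviation of order $L \cdot \sqrt{\int p^\phi_\Dcal(z)/\|z\|^m\,dz}/\sqrt{\Delta\cdot|S|\cdot \delta^m}$. This is exactly where the hypothesis that $\int p^\phi_\Dcal(z)/\|z\|^m\,dz$ is finite enters: it converts the adaptive bandwidth into a finite constant after change of variables $\zeta \mapsto \zeta/\|\phi(x_i)\|$ and yields the $C_2(p^\phi_\Dcal,L)/\sqrt{\Delta}$ factor.

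Second I would bound the bias term. Writing $\Exp{x\sim\Dcal}{\tilde{\mathcal{L}}(x)}$ as $\int p^\phi_\Dcal(z) \int k_{\delta\|z\|}(0,\zeta)\mathcal{L}(z+\zeta)\,d\zeta\,dz$ and Taylor-expanding $\mathcal{L}(z+\zeta)$ around $\zeta=0$, the rotational symmetry of $k$ kills all odd moments, so the leading correction is the integral of $\tfrac{1}{2}\mathrm{Tr}(\nabla^2\mathcal{L}(z))$ against the second moment of the kernel. Rewriting the second moment in terms of $h_i^2 = \delta^2\|z\|^2$ and integrating by parts transfers the Laplacian from $\mathcal{L}$ onto $p^\phi_\Dcal(z)\|z\|^2$; this is precisely why the assumption that $\int |\nabla^2(p^\phi_\Dcal(z)\|z\|^2)|\,dz$ is finite appears, and it yields a bias bound of the form $C_1(p^\phi_\Dcal,L)\cdot \delta^2$ with a remainder of $\mathcal{O}(\delta^3)$ that absorbs third-order Taylor terms.

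Third, I would balance the two rates: the bias scales as $\delta^2$ while the fluctuation scales as $\delta^{-m/2}/\sqrt{|S|}$. Setting these equal yields $\delta^{4+m} \sim 1/|S|$, i.e., $\delta = |S|^{-1/(4+m)}$, which plugged back in gives the advertised rate $|S|^{-2/(4+m)}$ in both terms, with residual contributions of order $|S|^{-3/(4+m)}$.

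The main obstacle will be the rigorous handling of the data-dependent bandwidth $h_i=\delta\|\phi(x_i)\|$: standard KDE bias/variance lemmas assume a common bandwidth, and the change of variables needed to put everything in a uniform form is exactly what forces the specific integrability assumptions on $p^\phi_\Dcal(z)\|z\|^2$ and $p^\phi_\Dcal(z)/\|z\|^m$. Ensuring that the integration by parts is justified under only these smoothness/integrability assumptions (and that boundary terms at infinity and near $z=0$ vanish) is the most delicate point of the argument; the remaining Taylor-expansion and Chebyshev steps are routine once this technical setup is in place.
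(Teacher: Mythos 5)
Your overall route coincides with the paper's: rewrite $\Ecal_{Rep}^\phi(f,S,\Lambda_\delta)$ as the error of a variable-bandwidth kernel smoothing of the loss in feature space, split it into a bias part and a statistical-fluctuation part (your decomposition via $\tilde{\mathcal{L}}(x)$ is, after Fubini, exactly the paper's split into the terms $(I)$ and $(II)$ built from $\hat p(z)=\frac{1}{|S|}\sum_i k_{\delta\|\phi(x_i)\|}(\phi(x_i),z)$), control the fluctuation with Chebyshev, and balance $\delta^2$ against $|S|^{-1/2}\delta^{-m/2}$ to obtain $\delta=|S|^{-1/(4+m)}$ and the rate $|S|^{-2/(4+m)}$. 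However, your bias step has a genuine gap under the stated hypotheses. You Taylor-expand $\mathcal{L}(z)=\loss(\psi(\omega,z),y[z])$ in $\zeta$ and then integrate by parts to transfer the Laplacian onto $p^\phi_\Dcal(z)\|z\|^2$. But only the density is assumed smooth; the loss is merely bounded, and $y[z]$ is in this framework (approximately) locally constant, so $z\mapsto\mathcal{L}(z)$ need not be differentiable at all---it can jump across label boundaries---and neither the pointwise second-order expansion nor the double integration by parts (with its surface terms along those boundaries) is justified. The paper never differentiates the loss: it expands the smooth density inside the convolution $\Exp{S}{\hat p(z)}$, using the variable-bandwidth KDE bias expansion it cites (Jones; Silverman, Chapter 4.3.1), which directly yields $(I)=\frac{\delta^2}{2}\tau_2\left|\int_z \nabla^2\left(p^\phi_\Dcal(z)\|z\|^2\right)\loss(\psi(z),y(z))\,dz\right|+\mathcal{O}(\delta^3)$ with the loss only integrated against, not differentiated. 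The repair of your argument is simply to perform the expansion on the density side.

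Your fluctuation step reaches a true bound but by an incorrect justification. Since the summands $\tilde{\mathcal{L}}(x_i)$ are iid and bounded by $L$, Chebyshev already gives a deviation of order $L/\sqrt{\Delta|S|}$, which is smaller than your claimed $\delta^{-m/2}|S|^{-1/2}$ rate and would in fact suffice for the proposition; the asserted ``sharper bound'' in which the second moment of the smoothed (hence bounded) functional $\tilde{\mathcal{L}}$ picks up a factor $h_i^{-m}$ is not correct. In the paper, the factor $\delta^{-m}$ and the constant $\alpha=\int_z p^\phi_\Dcal(z)\|z\|^{-m}dz$ enter differently: $\mathrm{Var}(Z)$ is bounded via Cauchy--Schwarz by the integrated pointwise variance of the variable-bandwidth KDE times $\int_z\loss^2\leq L^2\,\mathrm{Vol}(\phi(\Dcal))$, and the KDE variance formula contributes $\alpha\beta|S|^{-1}\delta^{-m}$, which is where the integrability assumption on $p^\phi_\Dcal(z)/\|z\|^m$ is actually used. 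So either use the simple bounded-iid variance bound, or reproduce the Cauchy--Schwarz/KDE-variance route; as written, the stated mechanism does not support the bound you claim, even though the bound itself (and the final balancing) is consistent with the paper's.
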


\begin{proof}
We let
\[\hat p(z) = \frac{1}{|S|} \sum_{i=1}^{|S|}  k_{\delta||\phi(x_i)||}(\phi(x_i),z)\] 
With $\lambda_i=k_{\delta||\phi(x_i)||}(0,z)$ 
we have
\begin{equation}
\begin{split}
\left |\vphantom{\Ecal_{Rep}^\phi (f,S,\Lambda_\delta)} \right.&\left. \Ecal_{Rep}^\phi (f,S,\Lambda_\delta) \right | 
=\left |  \Ecal(f)   -\frac 1{|S|} \sum_{i=1}^{|S|} \mathbb{E}_{\xi \sim \lambda_i}\left[\loss( \psi(\phi(x_i)+ \xi),y[\phi(x_i)+ \xi]\right]  \right |  \\
&=\left |  \int_z p^\phi_{\Dcal}(z) \cdot \ell(\psi(z),y(z))\ dz - \frac 1{|S|} \sum_{i=1}^{|S|} \int_z k_{\delta||\phi(x_i)||}(\phi(x_i),z) \cdot \ell(\psi(z),y(z))\ dz \right |  \\
&\leq \underbrace{ \left |  \int_z (p^\phi_{\Dcal}(z) - \mathbb{E}_S\left[ \hat p(z) \right]) \cdot \ell(\psi(z),y(z))\ dz \right |}_{(I)} + \underbrace{\left |  \int_z \left (\mathbb{E}_S\left[ \hat p(z) \right] -\hat p(z) \right )\cdot \loss(\psi(z),y(z)) \ dz \right  |}_{(II)}   \\
\end{split}
\label{eq:intAsVKDE}
\end{equation}
For the further analysis, we make use of \citet{jones1994variable} and combine it with the generalization to the multivariate case in Chp. 4.3.1 in~\citet{silverman1986density}. A Taylor approximation with respect to the bandwidth of the kernel $\delta$ yields
\[ (I) =  \frac{\delta^2}{2}\tau_2 \left | \int_z  \nabla^2 \left( p^\phi_\Dcal(z)||z||^2\right )  \loss(\psi(z),y(z)) dz \right | +\mathcal{O}(\delta^3) \]
where 
\[
\tau_2 = \int_{z} \|z\|^2 k_1(0,z) dz.
\]
For (II) we consider the random variable $Z=\int_z \hat p(z) \loss(\psi(z),y(z))\ dz$ as a function on the set of possible sample sets of a fixed size. Applying Chebychef's inequality on $Z$, we get that 
\[
Pr\left(\left|Z-\Exp{S}{Z}\right|> \epsilon_{est}\right)\leq \frac{Var(Z)}{\epsilon_{est}^2} =: \Delta\enspace .
\]
Solving for $\epsilon_{est}$ yields that with probability $1-\Delta$ we have
\[(II) = |Z-\Exp{S}{Z}| \leq \frac{\sqrt{Var(Z)}}{\sqrt{\Delta}} \] 
Further, the variance of $Z$ can be bounded by 
\[Var(Z) = \Exp{S}{(Z- \Exp{S}{Z})^2}\]
\[ = \Exp{S}{\left (\int \hat p(z) \loss(\psi(z),y(z))\ dz - \Exp{S}{\int \hat p(z)\loss(\psi(z),y(z))\ dz} \right )^2} \]
\[ = \Exp{S}{\left (\int  \left ( \hat p(z) - \Exp{S}{ \hat p(z)}\right )  \loss(\psi(z),y(z)) \ dz \right )^2} \]
\[ \leq \underbrace{  \Exp{S}{\int  \left ( \hat p(z) - \Exp{S}{\int \hat p(z)}\right )^2\  dz}}_{(III)} \cdot \underbrace{\left ( \int_z \loss(\psi(z),y(z))^2\ dz\right )}_{\leq L^2 \text{Vol}(\phi(\Dcal))}  \]
It follows from Eq. (2.3) in \citet{jones1994variable} together with Eq. 4.10 in~\citet{silverman1986density} for (III) that for small $\delta$ and large sample size $|S|$ the term (III), i.e., the variance of $\tilde{p}$, is given by 
\[
(III) = \beta |S|^{-1}\delta^{-m} \alpha + \mathcal{O}(|S|^{-2})\enspace ,
\]
where $\alpha=\int_z \frac{p^\phi_\Dcal(z)}{||z||^m}\ dz $ and $\beta = \int_z k_1(0,z)^2 \ dz$.
Putting things together gives
\begin{equation*}
\begin{split}
 \Ecal_{Rep}^\phi(f, S,\Lambda_\delta) | \leq & L \frac{\delta^2}{2}\tau_2 \left | \int_z  \nabla^2 \left( p^\phi_\Dcal(z)||z||^2\right ) \right | dz  + \frac{L \sqrt{\alpha \beta }}{\sqrt{\Delta}} \sqrt{\text{Vol}(\phi(\Dcal))}  |S|^{-\frac 12} \delta^{-\frac {m}2}\\ &  + \mathcal{O}(|S|^{-2}) +\mathcal{O}(\delta^3)\enspace .
\end{split}
\end{equation*} 
Choosing the bandwidth as $\delta=|S|^{-\frac{1}{4+m}}$ gives
\begin{equation*}
\begin{split}
| \Ecal_{Rep}^\phi(f, S,\Lambda_\delta) | \leq & |S|^{-\frac{2}{4+m}}\left (  \tau_2 L \left | \int_z   \nabla^2 \left( p^\phi_\Dcal(z)||z||^2\right )\right | dz  + \frac{\sqrt{\alpha \beta } L }{\sqrt{\Delta}}  \sqrt{\text{Vol}(\phi(\Dcal))} \right )\\
& + \mathcal{O}(|S|^{-\frac{3}{m+4}})\enspace .
\end{split}
\end{equation*}
The result follows from setting
\begin{equation*}
\begin{split}
C_1=&\tau_2 L \left | \int_z   \nabla^2 \left( p^\phi_\Dcal(z)||z||^2\right )\right | dz\\
C_2=&{\sqrt{\alpha \beta } L }  \sqrt{\text{Vol}(\phi(\Dcal))}\enspace .\\
\end{split}
\end{equation*}
\end{proof}

\newpage

\section{Relative flatness for a uniform bound over general distributions on feature matrices}\label{appdx:maxFlatness}
This article based its consideration on the specific distribution on feature matrices of Proposition~\ref{prp:lambdaA}, since this distribution allows to use standard results of kernel density estimation in the proof to Theorem~\ref{thm:FlatnessGenBound}. However, the decomposition of the risk in Equation~\ref{eq:splitError} holds for any distribution on feature matrices $\mathcal{A}$ and induced distributions on feature space $\Lambda_{\mathcal{A}}$. To allow maximal flexibility in the choice of a distribution  $\mathcal{A}$ on feature matrices of norm $||A||\leq 1$, we define another version of relative flatness based on the maximal eigenvalues of partial Hessians instead of the trace.

\begin{definition}\label{def:flatnessMeasureAppendix}
For a model $f(\w,x )=g(\w\phi(x))$ with a twice differentiable function $g$, a twice differentiable loss function $\loss$ and a sample set $S$ we define maximal relative flatness by
 \begin{equation}
 \kappa^\phi(\w) := \sum_{s=1}^d || \w_s||^2  \cdot \lambda_{max}(H_{s,s}(\w,\phi(S)))
\end{equation}
where $\lambda_{max}$ denotes the maximal eigenvalue of a matrix and $H_{s,s'}$ the Hessian matrix as in \eqref{eq:hessians}.
\end{definition}

The analogue to Theorem~\ref{thm:averaging} for maximal relative flatness shows that maximal flatness bounds feature robustness uniformly over all feature matrices of norm $||A||\leq 1$. 

\begin{theorem}\label{thm:averagingAppendix}
Consider a model $f(x,\w)=g(\w\phi(x))$ as above, a loss function $\loss$ and a sample set $S$, and let $O_m\subset \RR^{m\times m}$ denote the set of orthogonal matrices. Let $\delta$ be a positive (small) real number and $\w=\omega\in\RR^{d\times m}$ denote parameters at a local minimum of the empirical risk on a sample set $S$. If the labels satisfy that 
$y[\phi_{\delta A}(x_i)]=y[\phi(x_i)]=y_i$
for all $(x_i,y_i)\in S$ and all $||A||\leq 1$, then, for each feature selection matrix $||A||\leq 1$ the model $f(x,\omega)$ is $\left ( (\delta, S, A), \epsilon \right )$-feature robust for $\epsilon= \frac{\delta^2 d}2 \kappa^\phi(\omega) +\mathcal{O}(\delta^3)$
\end{theorem}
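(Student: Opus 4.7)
The plan is to reuse the first half of the proof of Theorem~\ref{thm:averaging} verbatim and then replace the Haar-averaging step by a worst-case estimate over all $\|A\|\leq 1$. Under the locally-constant-labels assumption, Equation~\eqref{eq:calculation1} still gives $\Ecal_\mathcal{F}^\phi(f,S,\alpha A)+\Ecal_{emp}(\omega,S)=\Ecal_{emp}(\omega+\alpha\omega A,S)$ for every feature matrix $A$ (the argument there did not use any structural property of $A$). Taylor expansion around the critical point $\omega$ as in \eqref{eq:calculation2} then yields, for $0\leq\alpha\leq\delta$ and every $\|A\|\leq 1$,
\[\Ecal_\mathcal{F}^\phi(f,S,\alpha A)\;\leq\;\tfrac{\delta^2}{2}\sum_{s,t=1}^d (\omega_s A)\,H_{s,t}(\omega,\phi(S))\,(\omega_t A)^T\;+\;\mathcal{O}(\delta^3),\]
so it suffices to bound this double sum by $d\,\kappa^\phi(\omega)$ uniformly in $A$.

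The key tool will be a Schur--Kato inequality for the flattened block Hessian. Since $\omega$ is a local minimum of the twice-differentiable empirical risk, the full $(dm)\times(dm)$ Hessian $H=[H_{s,t}]_{s,t=1}^d$ is positive semidefinite. Requiring the real quadratic $t\mapsto v^T H_{s,s}v+2t\,v^T H_{s,t}w+t^2 w^T H_{t,t}w$ to be non-negative for all $t$ forces the discriminant condition $|v^T H_{s,t}w|\leq \sqrt{v^T H_{s,s}v}\,\sqrt{w^T H_{t,t}w}$ for all $v,w\in\RR^m$. Applying this with $v=(\omega_s A)^T$, $w=(\omega_t A)^T$, together with the diagonal estimate $(\omega_s A)\,H_{s,s}\,(\omega_s A)^T\leq \lambda_{\max}(H_{s,s})\,\|\omega_s A\|^2\leq \lambda_{\max}(H_{s,s})\,\|\omega_s\|^2$ (which uses $\|A\|\leq 1$), bounds each summand of the double sum by $\sqrt{c_s c_t}$, where $c_s:=\lambda_{\max}(H_{s,s})\,\|\omega_s\|^2$.

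A final ordinary Cauchy--Schwarz against the all-ones vector in $\RR^d$ produces the dimensional factor: $\sum_{s,t=1}^d \sqrt{c_s c_t}=\bigl(\sum_{s=1}^d \sqrt{c_s}\bigr)^2\leq d\sum_{s=1}^d c_s=d\,\kappa^\phi(\omega)$. Plugging this into the Taylor bound gives $|\Ecal_\mathcal{F}^\phi(f,S,\alpha A)|\leq \tfrac{\delta^2 d}{2}\kappa^\phi(\omega)+\mathcal{O}(\delta^3)$ for every $\|A\|\leq 1$ and $0\leq\alpha\leq\delta$, which is exactly the $((\delta,S,A),\epsilon)$-feature robustness claimed.

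The main obstacle is the Schur--Kato step: it relies crucially on the \emph{full} block Hessian being PSD, not just the diagonal blocks (no analogous inequality would hold at an arbitrary parameter). This is nothing more than the second-order necessary optimality condition at the local minimum, but one has to apply it to the flattened parameter vector of dimension $dm$ rather than block-by-block. A secondary point worth flagging is that the dimensional factor $d$ is generically unavoidable here: it is precisely the price paid for uniform control over all $A$ with $\|A\|\leq 1$, whereas in Theorem~\ref{thm:averaging} the rotation-invariant Haar average kills all off-diagonal terms and replaces the factor $d$ by $1/m$.
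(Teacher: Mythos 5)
Your proposal is correct and follows essentially the same route as the paper: reuse the first half of the proof of Theorem~\ref{thm:averaging} to reduce feature robustness to the quadratic form $\sum_{s,t}(\omega_s A)H_{s,t}(\omega_t A)^T$ via Taylor expansion at the minimum, then exploit positive semidefiniteness of the full block Hessian to control cross terms by diagonal ones, pick up the factor $d$, and bound each diagonal term by $\|\omega_s\|^2\lambda_{\max}(H_{s,s})$ using $\|\omega_s A\|\leq\|\omega_s\|$. The only cosmetic difference is the intermediate block inequality: the paper uses $(\omega_s A)H_{s,t}(\omega_t A)^T\leq\frac{1}{2}\left((\omega_s A)H_{s,s}(\omega_s A)^T+(\omega_t A)H_{t,t}(\omega_t A)^T\right)$ and then maximizes the diagonal quadratics, whereas you use the equivalent block Cauchy--Schwarz bound followed by a scalar Cauchy--Schwarz, arriving at the same final bound.
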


\begin{proof}
Writing $z_i=\phi(x_i)$ and $\Ecal_{emp}(\w,S)=\Ecal_{emp}(f(\w,x),S)$ and using the assumption that $y[\phi_{\delta A}(x_i)]=y_i$ for all $(x_i,y_i)\in S$ and all $||A||\leq 1$, we have by the first part of the proof of Theorem~\ref{thm:averaging} that for any $0\leq \alpha\leq \delta$,
\begin{equation}\label{eq:calculation1b}
\Ecal_\mathcal{F}^\phi(f,S,\alpha A)+\Ecal_{emp}(\w,S)  = \Ecal_{emp}(\w+\alpha \w A,S)
\end{equation}
and 
\begin{equation}\label{eq:calculation2b}
         \Ecal_{emp}(\omega + \alpha  \omega A,S)  \leq \Ecal_{emp}(\omega,S) +  \frac{\delta^2}{2}  \sum_{s,t=1}^d (\omega_s A) \cdot H_{s,t}( \omega,\phi(S)) \cdot ( \omega_t A)^T + \mathcal{O}(\delta^3)
\end{equation}
at a local minimum $\omega$, where $\omega_s$ denotes the $s$-th row of $\omega$.

Note that for $||A||\leq 1$ and a row vectors $\w_s$ it holds that $||\w_s A||\leq ||\w_s||$. Further, since the full Hessian matrix $H(\omega,S)=(H_{s,t}(\omega,S))_{s,t}$ is a positive semidefinite matrix at a local minimum $\omega$, it holds for each row vectors $\w_s,\w_t$ that \begin{equation}\label{eq:posMatrixInequality} 
\w_s H_{s,t}(\omega,S) \w_t^T\leq \frac 12 \Big (\w_s H_{s,s}(\omega,S) \w_s^T+ \w_t H_{t,t}(\omega,S) \w_t^T\Big ),
\end{equation} 
We therefore get that for any feature matrix $A$ with $||A||\leq 1$,
\begin{equation}\label{eq:bound}
	\begin{split}
		 \Ecal^\phi_\mathcal{F}(f,S,\delta A)& \leq \max_{||A||\leq 1} \Ecal_{\mathcal{F}}(f,S,\delta A)\\
		 & \stackrel{\eqref{eq:calculation1b}, \eqref{eq:calculation2b}}{\leq} \max_{||A||\leq 1} \frac{\delta^2}{2}  \sum_{s,t=1}^d (\omega_s A) \cdot H_{s,t}(\omega,S) \cdot ( \omega_t A)^T +  \mathcal{O}(\delta^3)\\\
		&\stackrel{\eqref{eq:posMatrixInequality}}{\leq} \max_{||A||\leq 1} \frac{\delta^2 d}2\sum_{s=1}^d (\omega_s A) \cdot H_{s,s}(\omega,S) \cdot ( \omega_s A)^T+  \mathcal{O}(\delta^3)\\ 		
		& \leq \frac{\delta^2 d}2 \sum_{s=1}^d \max_{||\mathbf{z}||\leq ||\omega_s||}    \mathbf{z}  H_{s,s}( \omega,S) \mathbf{z}^T + \mathcal{O}(\delta^3)\\
		&=\frac{\delta^2 d}2 \sum_{s=1}^d  \max_{||\mathbf{z}|| = 1}||\omega_s||^2\  \mathbf{z}  H_{s,s}( \omega,S)  \mathbf{z}^T+ \mathcal{O}(\delta^3) \\
        &=\frac{\delta^2 d}2  \sum_{s=1}^d ||\omega_s||^2\  \lambda_{max} (H_{s,s}(\omega,S)) + \mathcal{O}(\delta^3)\\
		&=\frac{\delta^2 d}2   \kappa^\phi(\omega)+ \mathcal{O}(\delta^3)\\
	\end{split}    
\end{equation}
where we used the identity that $\max_{||x||=1}x^TMx=\lambda_{max}(M)$ for any symmetric matrix $M$.

\end{proof}

With this, the analogue to Theorem~\ref{thm:FlatnessGenBound} (or its version Theorem~\ref{thm:FlatnessGenBound_precise} in the appendix) allows maximal flexibility to choose $\mathcal{A}_\delta$ (and $\delta>0$) to bound representativeness. This leads to the following generalization bound.

\begin{theorem}
Consider a model $f(x,\w)=g(\w\phi(x))$, a loss function $\loss$, a sample set $S$, and let $m$ denote the dimension of the feature space defined by $\phi$ and let $\delta$ be a positive (small) real number. Let $\omega\in\RR^{d\times m}$ denote a local minimum of the empirical risk on an iid sample set $S$. 

Let $\Upsilon_\delta$ be the set of all  $|S|$-tuple of distributions $\Lambda_{\mathcal{A}_\delta}$ on feature vectors induced by a distribution $\mathcal{A}_\delta$ on feature matrices of norm smaller than $\delta$ as in Section~\ref{sct:inducingDistributions}. Then it holds that  
\[\Ecal_{gen}(f(\cdot,\omega),S) \leq \inf_{ \mathcal{A}_\delta \in \Upsilon_\delta} \Ecal_{Rep}^\phi(f,S,\Lambda_{\mathcal{A}_\delta}) + \frac{\delta^2d}2  \sum_{s=1}^d \kappa^\phi(\omega) + \mathcal{O}(\delta^3).\] 
\end{theorem}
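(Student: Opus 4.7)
The plan is to combine the exact risk decomposition of Equation~\ref{eq:splitError} with the uniform bound furnished by Theorem~\ref{thm:averagingAppendix}. Equation~\ref{eq:splitError} gives, for every $\mathcal{A}_\delta\in\Upsilon_\delta$,
\[ \Ecal_{gen}(f(\cdot,\omega),S) = \Ecal_{Rep}^\phi(f,S,\Lambda_{\mathcal{A}_\delta}) + \Ecal_{\mathcal{F}}(f,S,\mathcal{A}_\delta), \]
so it suffices to bound the feature robustness term uniformly in the choice of $\mathcal{A}_\delta$ and then optimize over the representativeness term alone.

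First I would unfold $\Ecal_{\mathcal{F}}(f,S,\mathcal{A}_\delta)=\Exp{A\sim \mathcal{A}_\delta}{\Ecal_{\mathcal{F}}(f,S,A)}$. Since $\mathcal{A}_\delta$ is supported on the set of feature matrices with $\|A\|\leq \delta$, each realization can be written as $\alpha B$ with $\alpha\in[0,\delta]$ and $\|B\|\leq 1$. Theorem~\ref{thm:averagingAppendix} then applies directly to each such realization, yielding the pointwise estimate
\[ \Ecal_{\mathcal{F}}(f,S,\alpha B) \leq \frac{\alpha^2 d}{2}\,\kappa^\phi(\omega) + \mathcal{O}(\alpha^3) \leq \frac{\delta^2 d}{2}\,\kappa^\phi(\omega) + \mathcal{O}(\delta^3). \]
Because the right-hand side is independent of the specific matrix $A$ and holds uniformly over the support of $\mathcal{A}_\delta$, taking expectation preserves it:
\[ \Ecal_{\mathcal{F}}(f,S,\mathcal{A}_\delta) \leq \frac{\delta^2 d}{2}\,\kappa^\phi(\omega) + \mathcal{O}(\delta^3). \]

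Substituting this back into the decomposition and observing that the flatness bound does not depend on the choice of $\mathcal{A}_\delta$, one may take the infimum over $\Upsilon_\delta$ in the representativeness term only, producing the stated inequality.

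The main obstacle is not analytical but rather one of careful bookkeeping of hypotheses: Theorem~\ref{thm:averagingAppendix} assumes locally constant labels (or, via Corollary~\ref{cor:averaginAppendix}, the approximate analogue of Definition~\ref{def:approximateLocallyConstantLabels}) for \emph{every} $\|A\|\leq 1$, and this must hold for every matrix in the support of each $\mathcal{A}_\delta\in\Upsilon_\delta$ for the uniform-to-expectation step to be valid. Provided this regularity assumption is imported (as it implicitly is in the preceding results), the proof is a direct assembly: decomposition, uniform flatness bound, expectation, infimum.
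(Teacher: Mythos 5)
Your proposal is correct and follows essentially the same route as the paper's own proof: split the generalization gap via Equation~\ref{eq:splitError}, use the uniform bound $\Ecal_{\mathcal{F}}(f,S,\delta A)\leq \frac{\delta^2 d}{2}\kappa^\phi(\omega)+\mathcal{O}(\delta^3)$ for all $\|A\|\leq 1$ from Theorem~\ref{thm:averagingAppendix} so that the expectation over any $\mathcal{A}_\delta$ inherits it, and then take the infimum over $\Upsilon_\delta$ in the representativeness term only. Your remark about importing the (approximately) locally constant labels assumption is a fair bookkeeping point that the paper leaves implicit.
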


\begin{proof}
Part~(i) follows from combining Equation~\ref{eq:splitError} with Theorem~\ref{thm:averagingAppendix}. First, we use \eqref{eq:splitError} to split the generalization gap into $\Ecal_{gen}(f) = \Ecal_{Rep}^\phi(f,S,\Lambda_{ \mathcal{A}_\delta}) + \Ecal_{\mathcal{F}}(f,S, \Acal_\delta)$. Then, Theorem~\ref{thm:averagingAppendix} shows that  $\Ecal_{\mathcal{F}}(f,S, \Acal_\delta)\leq  \frac{\delta^2d}2  \kappa^\phi(\omega)+ \mathcal{O}(\delta^3)$ as $\Ecal_{\mathcal{F}}(f,S, \delta A)\leq  \frac{\delta^2d}2  \kappa^\phi(\omega)+ \mathcal{O}(\delta^3)$ for all $||A||\leq 1$.
\end{proof}

\end{document}